\tikzstyle{level 1}=[level distance=3.5cm, sibling distance=3.5cm]
\tikzstyle{level 2}=[level distance=3.5cm, sibling distance=2cm]
\tikzstyle{bag} = [circle]
\tikzstyle{end} = [circle, minimum width=3pt,fill, inner sep=0pt]
\tikzstyle{mynode}=[thick,draw=black,circle, size=30]
\theoremstyle{plain}
\newtheorem{theorem}{Theorem}[section]
\newtheorem{corollary}[theorem]{Corollary}
\newtheorem{prop}[theorem]{Proposition}
\newtheorem{lemma}[theorem]{Lemma}
\theoremstyle{definition}
\newtheorem{definition}[theorem]{Definition}
\newenvironment{exm}
  {\pushQED{\qed}\examplex}
  {\popQED\endexamplex}
\definecolor{codegreen}{rgb}{0,0.6,0}
\definecolor{codegray}{rgb}{0.5,0.5,0.5}
\definecolor{codepurple}{rgb}{0.58,0,0.82}
\definecolor{backcolour}{rgb}{0.95,0.95,0.92}
\lstdefinestyle{mystyle}{
    backgroundcolor=\color{backcolour},   
    commentstyle=\color{codegreen},
    keywordstyle=\color{magenta},
    numberstyle=\tiny\color{codegray},
    stringstyle=\color{codepurple},
    basicstyle=\ttfamily\footnotesize,
    breakatwhitespace=false,         
    breaklines=true,                 
    captionpos=b,                    
    keepspaces=true,                 
    numbers=left,                    
    numbersep=5pt,                  
    showspaces=false,                
    showstringspaces=false,
    showtabs=false,                  
    tabsize=2
}
\title{Null Space Properties of Neural Networks with \\Applications to Image Steganography\\  }
\author[1]{Xiang Li}
\author[1]{Kevin M. Short}
\affil[1]{Integrated Applied Mathematics Program, Department of Mathematics and Statistics\\ University of New Hampshire, Durham, New Hampshire 03824, USA}
\date{}
\begin{document}
\maketitle

\begin{abstract}
This paper explores the null space properties of neural networks. 
We extend the null space definition from linear to nonlinear maps and discuss the presence of a null space in neural networks. 
The null space of a given neural network can tell us the part of the input data that makes no contribution to the final prediction so that we can use it to trick the neural network. 
This reveals an inherent weakness in neural networks that can be exploited.
One application described here leads to a method of image steganography. 
Through experiments on image datasets such as MNIST, we show that we can use null space components to force the neural network to choose a selected hidden image class, even though the overall image can be made to look like a completely different image. 
We conclude by showing comparisons between what a human viewer would see, and the part of the image that the neural network is actually using to make predictions and, hence, show that what the neural network ``sees'' is completely different than what we would expect.
\end{abstract}

\section{Introduction}
%\begin{multicols}{2}
%% Background, significance
Neural networks are powerful learning methods in use for various tasks today.
This is especially true in the domain of image recognition, where neural networks can achieve even human-competitive results\cite{su2019one}.
However, a number of studies have revealed that neural networks for image classification can be easily influenced to misclassify by modifying images\cite{akhtar2018threat}.

In 2014, Szegedy et al. first discovered an intriguing weakness of deep neural networks\cite{szegedy2014intriguing}. 
They showed that neural networks for image classification can be easily fooled by small perturbations, 
and they called these intentionally modified images \textit{adversarial examples}. 
Following this observation, numerous studies have been carried out to find different ways to generate adversarial examples\cite{goodfellow2015explaining, moosavi2017universal,su2019one}.  
The main idea is to find a subtle perturbation that can drastically change the output of a neural network by adding it to the data. 
It is observed that adversarial examples have good transferability across models, which suggests that the existence of adversarial examples is also a property of datasets\cite{ilyas2019adversarial},
thus adversarial examples are not restricted only to the given model. 
In our study, we aim to find a model-based method to fool the neural networks. 
In a manner different from the adversarial examples, we take advantage of certain null space properties of the neural network, and generate examples to fool the neural network by adding a large difference to images without changing the predictions. 
More importantly, to the naked eye, the large differences will look like completely different objects than those that will be recognized by the neural network, in a form of image steganography.

A null space is an important concept defined for linear transformations. 
Since neural networks are nonlinear maps, few studies have focused on the null space of neural networks. 
Cook et al. integrated the null space analysis on weight matrices with the loss function, and proposed an outlier detection method directly into a neural network for classification tasks\cite{cook2020outlier}.
Rezaei et al. analyzed the null space of the last layer weight matrix of neural networks, and used it to quantify overfitting without access to training data or knowledge of the accuracy of those data\cite{rezaei2023quantifying}.
However, there is still a lack of understanding of the global effects of null space properties for neural networks. 
In this study, 
%as an analogy of null space for linear maps, 
inspired by the null space of linear maps,
we introduce the concept of a null space for nonlinear maps, and discuss how the null space applies to neural networks. 
We show how the null space is generally an intrinsic property of neural networks. 
Further, once the neural network's architecture is determined, the dimension of the null space of the neural network is also determined in most cases. 
To illustrate this in concrete terms, we use null space-based methods to fool an image recognition neural network as an application of image steganography.

Image steganography is a technique to hide an image inside another image\cite{subramanian2021image}.
In addition to traditional-based steganography methods,
neural networks are also widely used for image steganography.
Neural networks have been employed for image steganography using various different approaches \cite{wu2018stegDCN, duan2019stegUNet, baluja2019stego3part}. 
%Though conceptually similar to steganography, in this null space-based method, we do not attempt to conceal the existence of a hidden image and extract the hidden image.
Though conceptually similar to other steganographic methods, the null space-based method presented here gives new degrees of freedom and a clear process for creating steganographic images for neural nets.
The main goal of this method is to hide an image and recognize the correct class of the hidden image, while presenting to the (human) viewer an image that is completely different than the hidden image that will be recognized by the neural network.

This paper is organized as follows: 
Section \ref{sect:methodology} introduces the null space of neural networks. 
Inspired by the null space analysis, we propose an image steganography method based on the null space of fully connected neural networks in Section \ref{sect:imagestego}. 
In Section \ref{sect:results}, we perform a number of experiments on different image datasets. 
In Section \ref{sect:conclusion}, we offer some discussion and some conclusions about the advantages and limitations of the null space method, and the differences between the null space image steganography method and adversarial examples. 
We also present some images showing that what the NN is seeing is not what we humans might think it is seeing.

\section{Methodology}\label{sect:methodology}
In this section, we introduce the null space of nonlinear maps and discuss the null space of neural networks. 
As an application of the null space for neural networks, we propose a new method for image steganography.

\subsection{Null space of linear and nonlinear maps}\label{sect:nullspace}
In linear algebra, the null space of an $m\times n$ matrix $A$ is $\text{Null}(A)=\{\vec{x}\in\mathbb{R}^n: A\vec{x}=\vec{0}\}$;
$\text{Null}(A)$ is a subspace of $\mathbb{R}^n$. 
Consider a linear map $T:\mathbb{R}^n\rightarrow\mathbb{R}^m, T(\vec{x})=A\vec{x}$ with $\text{Null}(A)$ nontrivial. By the definition of a null space, it is not difficult to see that for any $\vec{x}\in\mathbb{R}^n$ and $\vec{x}_{null}\in\text{Null}(A)$, $T(\vec{x}+\vec{x}_{null})=T\vec{x}$. 
That is, adding any vector in $\text{Null}(A)$ to any input $\vec{x}$ won't change the output since the transformation is a linear map. 

In most cases, nonlinear maps do not have the definition of null space as described above. 
The set $\{\vec{x}: f(\vec{x})=0 \}$ for a nonlinear map $f:\mathbb{R}^n\rightarrow\mathbb{R}^m$ is not a subspace of $\mathbb{R}^n$. 
Moreover, the property of linearity $f(\vec{x}+\vec{y})=f(\vec{x})+f(\vec{y})$ does not hold for the nonlinear map $f$.
Although we can still find the set $\{\vec{x}_\alpha: f(\vec{x}_\alpha)=0 \}$, the property $f(\vec{x}+\vec{x}_{\alpha})=f(\vec{x})$ is no longer true for all $\vec{x}\in\mathbb{R}^n$. 
To preserve this useful property, the concept of the null space can be adapted to nonlinear maps by identifying a set of vectors $\vec{x}_{null}$ that satisfy the condition $f(\vec{x}+a\vec{x}_{null})=f(\vec{x})$  for any $\vec{x}\in\mathbb{R}^n, a\in\mathbb{R}$:

\begin{definition}[Null space of a nonlinear map]
The \textbf{null space of a nonlinear map} $f: \mathbb{R}^n\rightarrow \mathbb{R}^m$, denoted as $N(f)$, is the set of all vectors that by adding these vectors to any input, the image under the map, or output, will not change:
\begin{equation}\label{eqn:NSdef}
    N(f):=\{ \vec v\in \mathbb{R}^n: f(\vec x) = f(\vec x+a\vec v) \text{ for all } \vec x\in \mathbb{R}^n, a\in\mathbb{R} \}.
\end{equation}
\end{definition}

When this definition is applied to linear maps, it is equivalent to the null space concept in linear algebra. 
However, for the purposes of this paper, we will use the term ``null space'' in a broader sense, extending its application to include affine maps, nonlinear maps, etc.
Similar to the null space in linear algebra, $N(f)$ is a subspace of $\mathbb{R}^n$ (Proposition \ref{prop:n-subsp}). 
Furthermore, as demonstrated in Proposition \ref{prop:same-image}, all the vectors in $N(f)$ are mapped to a single point by $f$.
More specifically, for any $\vec{x}, \vec{y}\in N(f),\; f(\vec{x})=f(\vec{y})$.
Further properties about null spaces of nonlinear maps are provided in Appendix \ref{appNS}.
One difference between null spaces of linear and nonlinear maps is that a nonlinear map will not always map null space vectors to zeros.

The definition of the null space for nonlinear maps is straightforward. 
However, in practice, it is difficult to find a null space explicitly by following this definition. 
Next, we will introduce an alternative, yet equivalent definition for the null space of nonlinear maps, which provides a more accessible way to find a null space of a nonlinear map.

To begin with, given a nonlinear map $f: \mathbb{R}^n\rightarrow \mathbb{R}^m$, let's consider the subspaces of $N(f)$ first. 
Suppose $f$ has a decomposition, that is, $f$ can be expressed as the composition of two functions $f=f_2\circ f_1$, and $f_1$ is linear. 
It follows that $\text{Null}(f_1)$ is a subspace of $N(f)$. 
This is straightforward to see, 
since for any $\vec{x}_{null}\in\text{Null}(f_1)$, we have that, $f_1(\vec{x})=f_1(\vec{x}+\vec{x}_{null})$ holds for every $\vec{x}\in\mathbb{R}^n$, 
and so $f(\vec{x}+\vec{x}_{null})=f_2\circ f_1(\vec{x}+\vec{x}_{null})=f_2\circ f_1(\vec{x})=f(\vec{x})$.
Therefore, now we have a way to find a subspace of $N(f)$.
Hence, we define a \textbf{partial null space}. 
\begin{definition}[Partial null space of a map]
Given a nonlinear map $f: \mathbb{R}^n\rightarrow \mathbb{R}^m$. If $f$ has a decomposition $f= f_2\circ f_1$, where $f_1:\mathbb{R}^n \rightarrow \mathbb{R}^d$ is a linear map,
then the null space of $f_1$ is defined as a partial null space of $f$ (given by $f_1$). We denote it as $PN_{f_1}(f)$.
\end{definition}

Then, a natural question is how to find the null space with partial null spaces and the decomposition of maps. 
In the following lemmas, we present some results between partial null spaces and the null space of a nonlinear map. Detailed proofs are provided in Appendix \ref{appNS}.

\begin{lemma}\label{lem:pn_subspace}
Let $f:\mathbb{R}^n\rightarrow \mathbb{R}^m$ be a nonlinear map. 
Every partial null space $PN(f)$ is a subspace of $N(f)$, $\dim PN(f)\leq \dim N(f)$.
\end{lemma}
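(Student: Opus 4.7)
The plan is to unpack both definitions and observe that the inclusion $PN(f)\subseteq N(f)$ is essentially immediate, after which the dimension bound is a routine fact about subspaces (invoking Proposition \ref{prop:n-subsp} to guarantee that $N(f)$ is itself a subspace).

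First I would fix a partial null space $PN_{f_1}(f) = \text{Null}(f_1)$, where $f = f_2\circ f_1$ and $f_1:\mathbb{R}^n\to\mathbb{R}^d$ is linear. Pick an arbitrary $\vec v\in \text{Null}(f_1)$; the goal is to check it satisfies the defining condition of $N(f)$, namely $f(\vec x) = f(\vec x + a\vec v)$ for all $\vec x\in\mathbb{R}^n$ and $a\in\mathbb{R}$. Because $\text{Null}(f_1)$ is closed under scalar multiplication, $a\vec v\in \text{Null}(f_1)$ for every $a$, and linearity of $f_1$ then gives
\[
f_1(\vec x + a\vec v) = f_1(\vec x) + f_1(a\vec v) = f_1(\vec x).
\]
Composing with $f_2$ yields $f(\vec x + a\vec v) = f_2(f_1(\vec x)) = f(\vec x)$, so $\vec v\in N(f)$. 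This is the same observation sketched in the paragraph motivating the definition; I would just write it cleanly as the inclusion step.

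Next I would record that $PN_{f_1}(f) = \text{Null}(f_1)$ is a subspace of $\mathbb{R}^n$ by standard linear algebra, and that $N(f)$ is a subspace of $\mathbb{R}^n$ by Proposition \ref{prop:n-subsp}. Thus $PN_{f_1}(f)$ is a subspace contained in the subspace $N(f)$, so the dimension bound $\dim PN_{f_1}(f) \leq \dim N(f)$ follows immediately from the monotonicity of dimension under subspace inclusion (a basis of $PN_{f_1}(f)$ remains linearly independent in $N(f)$ and can therefore be extended to a basis of $N(f)$).

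There is no real obstacle here: the substantive content is just that linearity of $f_1$ lets vectors in $\text{Null}(f_1)$ be inserted into the argument of $f$ without perturbing the image, and this is precisely what the $N(f)$ condition demands. The only care needed is to treat the scalar $a$ correctly, which is handled by noting that $\text{Null}(f_1)$ is closed under scaling. If anything warrants a one-line remark, it is that the argument applies to \emph{every} decomposition $f = f_2\circ f_1$ with $f_1$ linear, so the lemma holds for an arbitrary choice of partial null space, justifying the unadorned notation $PN(f)$ in the statement.
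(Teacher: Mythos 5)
Your proposal is correct and follows essentially the same argument as the paper: verify that any $\vec v\in\text{Null}(f_1)$ satisfies the defining condition of $N(f)$ by linearity of $f_1$ and composition with $f_2$, then get the dimension bound from the subspace inclusion. You are in fact slightly more careful than the paper's proof in explicitly handling the scalar $a$ via closure of $\text{Null}(f_1)$ under scaling, which is a welcome touch but not a different approach.
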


\begin{lemma}\label{lem:pn}
For every nonlinear map $f: \mathbb{R}^n\rightarrow \mathbb{R}^m$,
there exists a decomposition $f= f_2\circ f_1$ with $f_1$ being a linear map such that $PN_{f_1}(f)=\text{Null}(f_1)= N(f)$, i.e., the partial null space given by $f_1$ is equal to $N(f)$.
\end{lemma}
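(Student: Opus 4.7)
The plan is to construct the desired decomposition by using a linear projection whose kernel is exactly $N(f)$. Since a previous proposition (\ref{prop:n-subsp}) tells us that $N(f)$ is a subspace of $\mathbb{R}^n$, I can let $k=\dim N(f)$, set $d=n-k$, and fix an orthonormal basis of the orthogonal complement $N(f)^{\perp}$. Stacking those basis vectors as the rows of a $d\times n$ matrix $A$, I define $f_1:\mathbb{R}^n\to\mathbb{R}^d$ by $f_1(\vec x)=A\vec x$. By construction $f_1$ is linear, surjective, and $\mathrm{Null}(f_1)=N(f)$.

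Next I would define the companion map $f_2:\mathbb{R}^d\to\mathbb{R}^m$ so that $f=f_2\circ f_1$. For any $\vec y\in\mathbb{R}^d$, surjectivity of $f_1$ guarantees some $\vec x$ with $f_1(\vec x)=\vec y$, and I set $f_2(\vec y):=f(\vec x)$. The only thing needing verification is well-definedness: if $f_1(\vec x_1)=f_1(\vec x_2)$, then $\vec x_1-\vec x_2\in\mathrm{Null}(f_1)=N(f)$, and the defining property of $N(f)$ (taking $a=1$ and $\vec v=\vec x_1-\vec x_2$ in Equation \ref{eqn:NSdef}) yields $f(\vec x_1)=f(\vec x_2)$. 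Thus $f_2$ is unambiguously defined on the image of $f_1$ (which is all of $\mathbb{R}^d$), and $f=f_2\circ f_1$ holds by construction.

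Finally, with this decomposition in hand the equality $PN_{f_1}(f)=\mathrm{Null}(f_1)$ is just the definition of partial null space, and $\mathrm{Null}(f_1)=N(f)$ was built into the choice of $A$, giving the chain $PN_{f_1}(f)=\mathrm{Null}(f_1)=N(f)$ claimed in the lemma. The main thing to watch is the well-definedness argument for $f_2$: it is the only place where the hypothesis enters substantively, but it is an immediate consequence of the null-space definition, so no serious obstacle arises. The nonlinearity of $f$ is not really used, which is fine because the construction works uniformly for any map for which $N(f)$ has been defined.
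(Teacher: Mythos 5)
Your proof is correct and is essentially the same argument as the paper's: the paper takes $f_1$ to be the quotient map $\mathbb{R}^n\to\mathbb{R}^n/N(f)$ and defines $f_2$ on equivalence classes, while you realize the same construction concretely via a matrix whose rows span $N(f)^{\perp}$, so that $\mathrm{Null}(f_1)=N(f)$ by design. In both cases the substantive step is the identical well-definedness check for $f_2$, using that $f_1(\vec x_1)=f_1(\vec x_2)$ forces $\vec x_1-\vec x_2\in N(f)$ and hence $f(\vec x_1)=f(\vec x_2)$.
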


Note that $PN_{f_1}(f)=\text{Null}(f_1)= N(f)$ is not always true for any arbitrary decomposition $f=f_2\circ f_1$. In the following corollary, we further discuss the conditions under which this equation holds.

\begin{corollary}\label{cor:NSdef2}
Let $f:\mathbb{R}^n\rightarrow \mathbb{R}^m$ be a nonlinear map. 
The null space of $f$ is the largest partial null space of $f$. That is, $N(f)=PN_{f_1}(f)$ if $f=f_2\circ f_1$, $f_1$ is a linear map and $PN_{f_1}(f)$ has the largest dimension among all decompositions of $f$.
\end{corollary}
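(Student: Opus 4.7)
The plan is to derive this corollary as an immediate consequence of Lemmas \ref{lem:pn_subspace} and \ref{lem:pn}. Lemma \ref{lem:pn_subspace} provides the upper bound $\dim PN_{f_1}(f) \leq \dim N(f)$ for \emph{every} decomposition $f = f_2 \circ f_1$ with $f_1$ linear, while Lemma \ref{lem:pn} exhibits at least one decomposition for which this bound is attained. Together these show that the supremum of $\dim PN_{f_1}(f)$ over all such decompositions equals $\dim N(f)$ and is achieved.

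Concretely, I would first invoke Lemma \ref{lem:pn} to note that there exists a decomposition $f = f_2 \circ f_1$ with $f_1$ linear and $PN_{f_1}(f) = N(f)$, so in particular the quantity $\dim N(f)$ is realized as the dimension of some partial null space. Combined with the uniform upper bound from Lemma \ref{lem:pn_subspace}, this shows that $\dim N(f)$ is the maximum value of $\dim PN_{f_1}(f)$ across all valid decompositions.

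Now let $f = f_2 \circ f_1$ be any decomposition (with $f_1$ linear) for which $\dim PN_{f_1}(f)$ is maximal, and hence equals $\dim N(f)$. By Lemma \ref{lem:pn_subspace}, $PN_{f_1}(f)$ is a subspace of $N(f)$. Since both are subspaces of the finite-dimensional space $\mathbb{R}^n$ and they have the same (finite) dimension, the inclusion forces equality, giving $PN_{f_1}(f) = N(f)$ as required.

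There is no substantive obstacle here: the corollary is essentially a packaging of the two lemmas. The only minor point worth being careful about is the well-definedness of ``largest dimension,'' i.e.\ that the supremum is attained rather than merely approached; this is automatic because the dimensions in question are integers bounded above by $n$, and Lemma \ref{lem:pn} already produces an explicit decomposition realizing $\dim N(f)$.
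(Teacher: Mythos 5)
Your proof is correct and follows exactly the route the paper intends: the paper derives this corollary directly from Lemmas \ref{lem:pn_subspace} and \ref{lem:pn}, and your argument simply fills in the routine details (attainment of the maximum and the equal-dimension-subspace step) of that same packaging.
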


Directly from Lemma \ref{lem:pn_subspace} and \ref{lem:pn}, Corollary \ref{cor:NSdef2} gives an equivalent definition of $N(f)$ in terms of its partial null space. 
Consequently, instead of searching for all possible vectors that satisfy equation (\ref{eqn:NSdef}), we have turned the null space problem into the problem of finding a decomposition of $f$ that yields the largest partial null space.

\subsection{Null space of fully connected neural networks}\label{FCNN}
\begin{definition}
Consider a fully connected neural network $f: \mathbb{R}^{n_0}\rightarrow\mathbb{R}^{n_{K+1}}$, 
where $K\in\mathbb{N}$ denotes the \textbf{number of hidden layers} of $f$, with $n_1, n_2, \dots, n_K\in\mathbb{N}$ representing \textbf{widths of the hidden layers}. $n_0, n_{K+1}\in\mathbb{N}$ are input and output dimensions, respectively. 
$f$ is called a $(K + 1)-$layer Fully Connected Neural Network (FCNN). 
In this paper, we refer to this FCNN architecture as a $(n_0, n_1, \dots, n_K)-$FCNN.

The network $f$ consists of $K+1$ paired linear and affine transformations, each defined as $T_i(\vec{x})=W_i \vec{x}, A_i(\vec{x}) = \vec{x} +\vec{b}_i$, respectively. 
Here, $W_i$ represents the weight matrix and $\vec{b}_i$ represents the bias vector in the $i^{th}$ layer, for $i=1,\dots, K+1$.
The activation functions are denoted by $\sigma$. 
Common activation functions include sigmoid activation function and Rectified Linear Unit (ReLU).

The function $f$ then can be represented by a composition of maps:
\begin{equation}\label{eqn:NN}
    f = A_{K+1} \circ T_{K+1} \circ \sigma \circ T_K \circ \cdots \circ A_2 \circ T_2 \circ \sigma \circ A_1 \circ T_1
\end{equation}

\end{definition}

For more introduction to the neural networks, see Appendix \ref{appIntro}. 

According to Corollary \ref{cor:NSdef2}, we can do null space analysis on FCNNs using the composite function form (\ref{eqn:NN}). 
An FCNN can be naturally decomposed into two parts, $(A_{K+1} \circ T_{K+1} \circ \sigma \circ T_K \circ \cdots \circ A_2 \circ T_2 \circ \sigma \circ A_1)$ and $T_1$, where $T_1$ is linear. 
Thus, $\text{Null}(T_1)$ is a partial null space of FCNN $f$ which, in most cases, is also the null space $N(f)$. Here is an example.

\begin{exm}[ReLU neural networks (ReLU NNs) and null space]
A ReLU NN is an FCNN with activation function $\sigma_R(x)=\max(x, 0)$. 
Consider a $(n_0, n_1, \dots, n_K)-$ReLU NN defined as $f = A_{K+1} \circ T_{K+1} \circ \sigma_R \circ T_K \circ \cdots \circ A_2 \circ T_2 \circ \sigma_R \circ A_1 \circ T_1$, where $A_i$ and $T_i$ denote affine transformations and linear maps, respectively. 
$f$ can be decomposed into two maps $f=(A_{K+1} \circ T_{K+1} \circ \sigma_R \circ T_K \circ \cdots \circ A_2 \circ T_2 \circ \sigma_R \circ A_1) \circ T_1$.
Thus, we know that a partial null space of $f$ is $PN_{T_1}(f)=\text{Null}(T_1)$. 
In most cases, this is also the null space of $f$, $N(f)=PN_{T_1}(f)$.
Further, assume the first hidden layer weight matrix is $W_1$, $N(f)=\text{Null}(W_1)$.
%A detailed numerical example is provided in Appendix \ref{appFCNN}.

To visualize the null space of a ReLU NN, Figure \ref{fig:reluNN} shows the surface of a $(2,1,3,1)-$ReLU NN combined with a contour plot beneath the surface plot for clearer interpretation. 
The figure shows that this ReLU NN has a one-dimensional null space, represented by the parallel lines running back and into the page on the contour plot. 
%Highlighted below is a red vector parallel to these contour lines, indicating the null space direction.
Notably, along the null space direction, all points on the surface are marked with the same color, which means they have the same output value. 
This implies that regardless of the distance traversed in the null space direction, the output of the ReLU NN remains constant. 
Similarly, in higher-dimensional cases, a ReLU NN with a higher-dimensional null space would also have the same values along the high-dimensional null space plane or hyperplane.

\begin{figure}[H]
\centering
\begin{tikzpicture}

\node[inner sep=0pt] (image) at (0,0) {\includegraphics[width=0.55\textwidth]{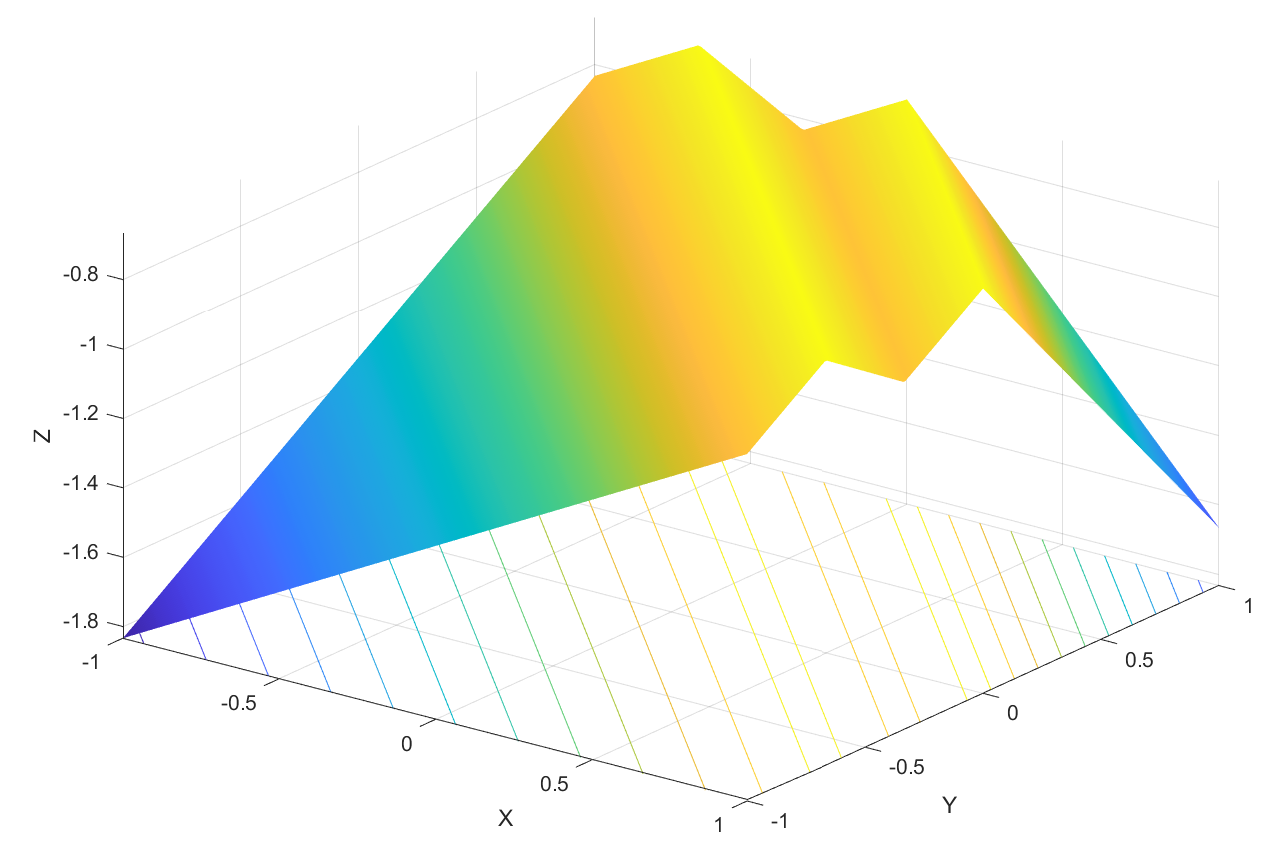}};
    
\draw[->, red] (1.5, -1.5) -- ++(-0.2,0.6);
\node[anchor=west] at (1.,-1.75) {\tiny \textcolor{red}{Null space direction $\vec{v}$}};
\end{tikzpicture}

\caption{The surface plot of a (2,1,3,1)-ReLU NN.}
\label{fig:reluNN}
\end{figure}
\end{exm}

%% (possible problems?)

\subsection{Null space of convolutional neural network}\label{CNN}
Similarly, a convolutional neural network (CNN) can also be represented by the composition of maps. 
Instead of having only affine transformations and activation functions, the early layers of a CNN allow two additional types of computation: convolution, denoted as $C$, and pooling, denoted as $S$.
To determine the null space of a CNN $f$, we can first decompose $f$ into the composition of maps $f=T_{k+1}\circ\sigma\circ\dots\circ\sigma\circ C$. 
So, in most cases, the first convolution operation $C$ is the maximum linear map of $f$, and $N(f)=\text{Null}(C)$.

The null space of a CNN is significantly more complex than an FCNN. 
In this paper, we will only give an overview of the null space analysis for CNN in a simple case. 
Consider the case when the input image of a CNN only has one channel, for example, the MNIST dataset. 

A typical convolutional layer in a CNN comprises multiple kernels. 
To find the null space of the entire convolutional layer $\text{Null}(C)$, we can initially determine the null space of a single kernel $\text{Null}(K)$ corresponding to one convolution operation.
As shown in Appendix \ref{appCNN}, given one kernel, if the convolution operation keeps the output image with the same dimensions as the input image (same padding), then $N(f)=\{0\}$ in most cases. 
If the output image of the first convolution operation has fewer dimensions than the input image (valid padding), $N(f)$ can be nontrivial.
According to the Lemma \ref{lem:conv}, under most cases, the null space of a given kernel has a dimension that is equal to the difference between the dimensions of input and output image. 
For example, given a $28\times 28$ image and a $3\times3$ kernel, the dimension of the null space of this convolution operation is $28\times28-26\times26=108$ in most cases.

Assume the first convolutional layer $C$ has $n$ kernels $K_1, \cdots, K_n$, then $N(f)=N(C)=\bigcap\limits_{i=1}^n \text{Null}(K_i)$.
For instance, if the input is a $28\times28$ image, and the first convolutional layer has six $3\times 3$ kernels, then the null space of the CNN is the intersection of six $109-$dimensional subspaces of $\mathbb{R}^{784}$. 
%For almost all cases, the intersection is trivial in this example, i.e., is the $\{0\}$ set.
%Therefore, the likelihood of a CNN having a nontrivial null space is considerably lower. 
The likelihood of there being a non-trivial intersection between 109-dimensional subspaces of a 784-dimensional space is low, but care must be taken when designing the kernels. 
Compared to FCNNs, CNNs can achieve a trivial null space with fewer unknown parameters in the first weight matrix and perhaps greater robustness.

%% Method/Algorithm to generate a stego image with fully connected ReLU NN
\section{Photo steganography based on the null space of a neural network}\label{sect:imagestego}
In considering the task of hiding secret information inside another image, a natural idea emerges from our previous discussions on the null space properties of FCNNs.

Let us denote by $f$ an FCNN designed and trained for image classification.
For an image $X$, $f(x)$ predicts the class of image $X$. 
Assume the null space of this FCNN $f$ is non-trivial (i.e., $N(f)\neq \{ \mathbf{0} \}$).
Using the null space of $f$, any image $X$ can be decomposed into two components: the orthogonal projection onto the null space $\hat{X}$ and its orthogonal complement $X_{\perp}$. 
Provided that the dimension of $N(f)$ is sufficiently large, $\hat{X}\in N(f)$ can retain most of the visual features while having no contribution to the final prediction (with $f$); $X_{\perp}$ extracts all the important information for prediction with $f$, but it would have an entirely different visual appearance from the original image $X$.

Moreover, we can choose any $X_{null}\in N(f)$ to construct a modified image $\Tilde{X}=X+X_{null}$.
Visually, this modified image can be designed to appear significantly different from the original image $X$, while it preserves the original class prediction with $f$, that is, $f(\Tilde{X})=f(X)$. 
Further, if we form combinations using the orthogonal complement, such as if
$Y = Y_{null} + Y_{\perp}$, then a combination $Z = X_{null} + Y_\perp$ would be classified based only on $Y_\perp$.

Following the ideas above, we propose a method for crafting \textit{steganographic images}, referred to as ``stego images'' in this paper, to fool FCNNs and pass on secret classification information. 
Consider two images $H$ and $C$. 
$H$ is the \textit{hidden image} containing the information that we intend to transmit secretly. 
$C$ is the \textit{cover image} used to conceal the hidden information. 
%By decomposing each image into two components based on the null space $N(f)$, we obtain ${\hat{H}, H_{\perp}}$ for image $H$ and ${\hat{C}, C_{\perp}}$ for image $C$.
Suppose our goal is to pass an image $S$ that looks like $C$ but secretly hides the classification information of $H$.
%This can be achieved by constructing a stego image $S$ by combining $H_{\perp}$ with $\hat{C}$: $H_{\perp}$ provides the valid prediction information and $\hat{C}$ provides the confusing appearance.
This can be achieved by decomposing each image into two components based on the null space $N(f)$, and combining the null space component of $C$ with the predictable component of $H$ (i.e., the orthogonal projection $H_\perp$). 
Consequently, the resulting stego image $S$ will visually be close to the appearance of $C$ while being classified under the same category as $H$ by the neural network $f$.

%This process allows us to create a visually distinct image from the original $X$, while ensuring that its classification remains unchanged under $f$, i.e., $f(X) = f(\Tilde{X})$.

Let us use the MNIST dataset to provide a more specific description of the entire image steganography algorithm. 
Given the MNIST dataset ${(X_i, y_i)}_{i=1}^N$, where $X_i$ are grayscale images with $28\times 28$ pixels, normalized to fall within the range $[-1,1]$; $y_i$ are labels. 
A pre-trained FCNN $f$ is needed beforehand. The neural network should be trained using both the original images $X_i$ and their rescaled images $\alpha X_i$, $\alpha\in(0,1)$. 
This provides ``headroom'' when combining images, but has no significant impact on the accuracy of the predictions. 
The initial step is to find the null space $N(f)$ of FCNN, which can be done by applying singular value decomposition to the first hidden layer weight matrix $W_1=USV^T$ and the columns of $V$ corresponding to zero singular values form a basis of $N(f)$. 
Then, given the cover image $C$ and hidden image $H$, using the null space $N(f)$, create a steganographic image $S$, which looks like the cover image $C$ but has the same classification as the hidden image $H$. 
Specifically, to do this, we find the orthogonal projection $\hat{C}$ of $C$ onto $N(f)$ and orthogonal complement $H_{\perp}$ of $H$.
Then, we create the stego image $S$ with the linear combination $\alpha_1H_{\perp}+\alpha_2\hat{C},\; \alpha_1,\alpha_2\in(0, 1)$ such that $S\in [-1,1]^n$. The aim of this step is to guarantee that $S$ is in the domain of $f$ and $f(S)=f(\alpha_2 \hat{C})=f(\alpha_2 C)$ can pass the correct prediction for the hidden image. 
A visualization of the whole process is given in figure \ref{fig:stego-steps}. 
This algorithm is also applicable to other datasets. In the next section, we will show the experiments with MNIST, the Fashion-MNIST (FMNIST), and Extended MNIST (EMNIST) datasets. 
For simplification, we will only present experiment results with $\alpha_1=0.2$.

\begin{figure}[H]
    \centering
    \includegraphics[width=.7\textwidth]{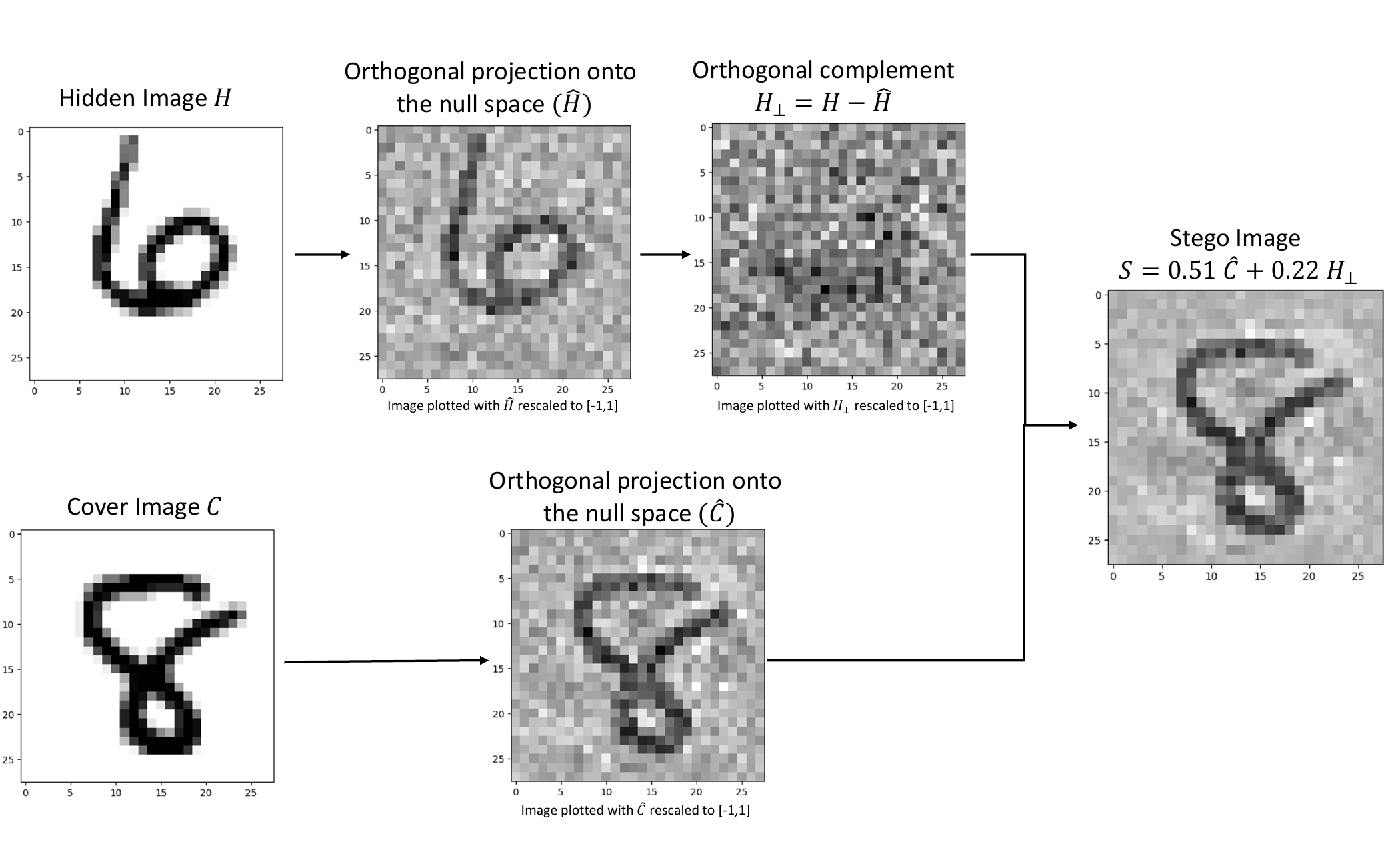}
    \caption{An example of creating a steganographic image with null space of a ReLU NN.}
    \label{fig:stego-steps}
\end{figure}

For the example in Figure \ref{fig:stego-steps}, and any other examples created with the algorithm, the NN will classify the image as the category of the hidden image and will completely ignore the cover image.

\section{Experimental results}\label{sect:results}
%% Steganography examples w/ digits and clothes
% Show some examples and the differences in confidence(probability).
All experiments presented in this section were implemented in TensorFlow. 
The datasets employed for these experiments include MNIST\cite{MNIST}, Fahion-MNIST\cite{FMNIST}, Extended MNIST (EMNIST)\cite{EMNIST}, and CIFAR-10\cite{CIFAR10}.
%In these datasets, each example is a $28\times28$ grayscale image associated with a label.
%% Steganography examples w/ handwritten alphabets? (To be done) 
\subsection{Hide the digits}
In this section, we use the MNIST dataset to conduct experiments in image steganography. 
For these experiments, we trained a $(784, 32, 16, 10)-$ ReLU NN based on an expanded image training data set that included the original and rescaled with $\alpha_1= 0.2$ datasets with the prediction accuracy of $99.79\%$ on the original training data. 
For the original training data it correctly predicted, the average confidence level is $99.98\%$. 
When evaluated on the rescaled training data, this ReLU NN maintained a high prediction accuracy of $99.78\%$ with an average confidence of $99.77\%$ on correctly predicted data. 
Notably, this ReLU NN has a 752-dimensional null space. 
This large null space plays a crucial role in the steganographic capabilities of the network.

We then filter out only the correctly predicted images to create a new dataset. This dataset comprises a total of 50000 images, with each class having 5,000 examples.
The $(784, 32, 16, 10)-$ReLU NN can predict correctly on both original and rescaled data from this new dataset. 
%The new dataset has an average confidence of $99.98\%$ on original images and $99.77\%$ on rescaled images.
The main purpose of this process is to ensure that for any chosen hidden image $H$ from the new dataset, feeding $0.2H$ into the ReLU NN does not create scaling issues, and yields a correct classification. 
Next, we will show some representative results with stego images produced by $S=0.2H_{\perp}+\alpha_2\hat{C}$.

Figure \ref{fig:stego-ex} and \ref{fig:MNIST-more} present stego images generated using the images from the new dataset. 
For each set of three images, the first is the cover image, the second is the hidden image, and the third, the stego image, combines parts of the cover and the hidden image. 
Additionally, Figure \ref{fig:stego-ex} annotates the combination weights $\alpha_1,\alpha_2$ of stego images, as well as the predictions and their corresponding confidence levels. 
In each case, the ReLU NN predicts the hidden digit with high confidence.

%\begin{figure}
%    \centering
%    \includegraphics[width=0.65\textwidth]{}
%    \caption{Examples of steganographic images with MNIST dataset.}
%    \label{fig:stego-ex}
%\end{figure}

\begin{figure}[H]
\centering
\begin{subfigure}{.48\textwidth}
  \centering
  \includegraphics[width=.98\linewidth]{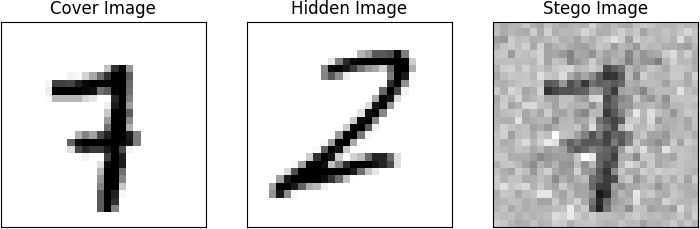}
  \caption{The cover image is predicted as 7 with a confidence of nearly 100\%. The hidden image is predicted as 2 with a confidence of nearly 100\%. \textbf{The stego image}, $S=0.2H_{\perp}+0.51\hat{C}$, \textbf{is predicted as 2} with a confidence level 99.9\%.}
  \label{fig:MNIST-ex1}
\end{subfigure}%
\hfill
\begin{subfigure}{.48\textwidth}
  \centering
  \includegraphics[width=.98\linewidth]{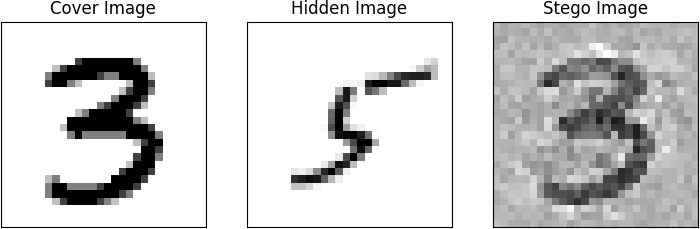}
  \caption{The cover image is predicted as 3 with a confidence of nearly 100\%. The hidden image is predicted as 5 with a confidence of nearly 100\%. \textbf{The stego image}, $S=0.2H_{\perp}+0.49\hat{C}$, \textbf{is predicted as 5} with a confidence level 99.9\%.}
  \label{fig:MNIST-ex2}
\end{subfigure}
\caption{Examples of steganographic images with MNIST dataset.}
\label{fig:stego-ex}
\end{figure}

\begin{figure}[H]
    \centering
    \begin{subfigure}{0.22\textwidth}
        \includegraphics[width=\linewidth]{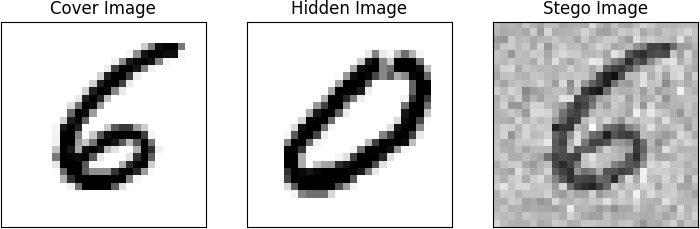}
    \end{subfigure}
    \hfill
    \begin{subfigure}{0.22\textwidth}
        \includegraphics[width=\linewidth]{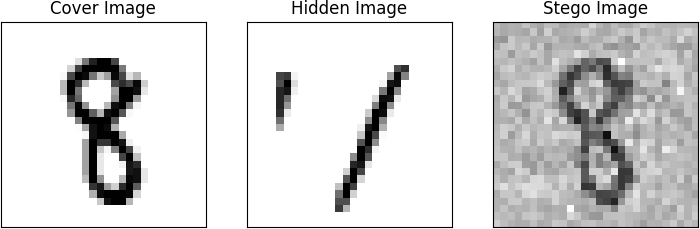}
    \end{subfigure}
    \hfill
    \begin{subfigure}{0.22\textwidth}
        \includegraphics[width=\linewidth]{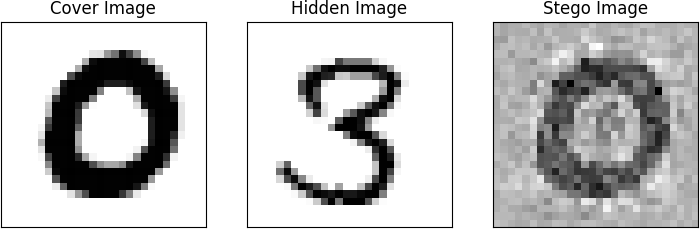}
    \end{subfigure}
    \hfill
    \begin{subfigure}{0.22\textwidth}
        \includegraphics[width=\linewidth]{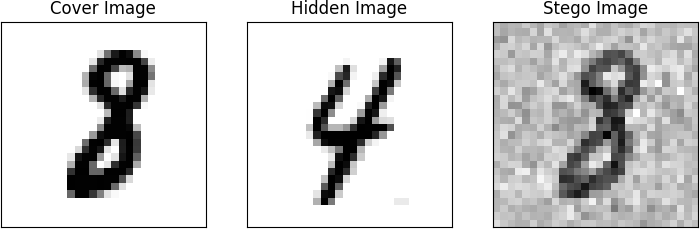}
    \end{subfigure}

    \medskip

    \begin{subfigure}{0.22\textwidth}
        \includegraphics[width=\linewidth]{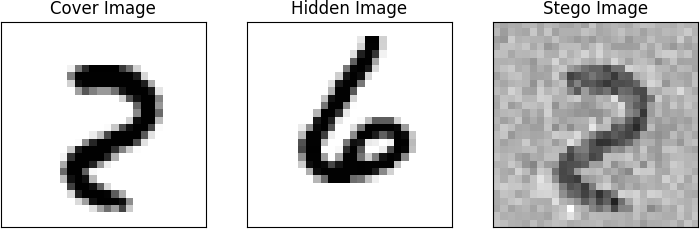}
    \end{subfigure}
    \hfill
    \begin{subfigure}{0.22\textwidth}
        \includegraphics[width=\linewidth]{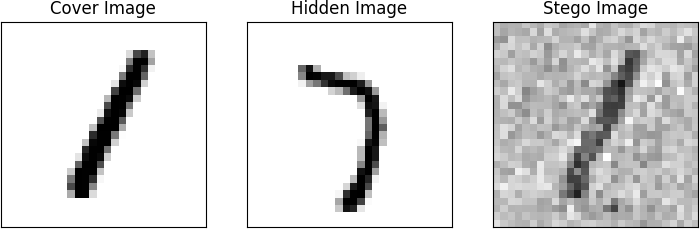}
    \end{subfigure}
    \hfill
    \begin{subfigure}{0.22\textwidth}
        \includegraphics[width=\linewidth]{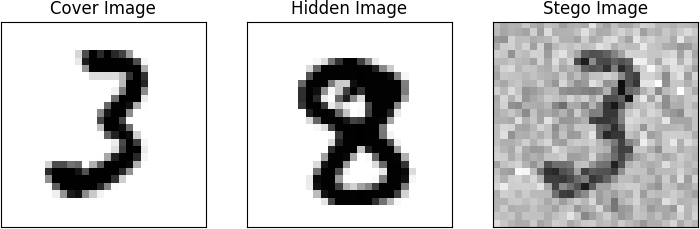}
    \end{subfigure}
    \hfill
    \begin{subfigure}{0.22\textwidth}
        \includegraphics[width=\linewidth]{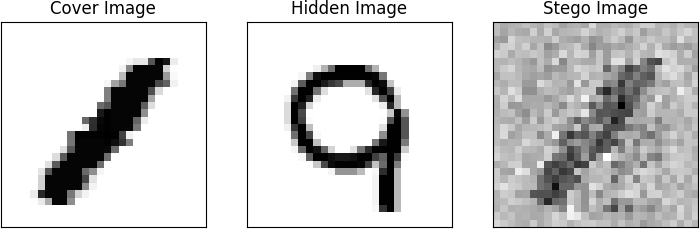}
    \end{subfigure}
    
    \medskip

    \begin{subfigure}{0.22\textwidth}
        \includegraphics[width=\linewidth]{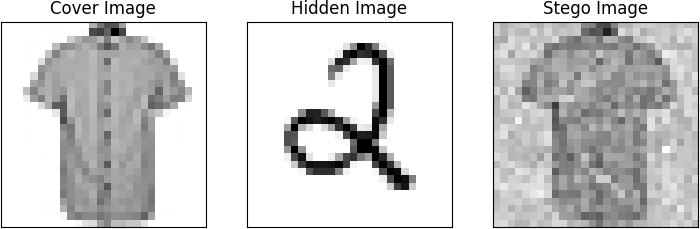}
    \end{subfigure}
    \hfill
    \begin{subfigure}{0.22\textwidth}
        \includegraphics[width=\linewidth]{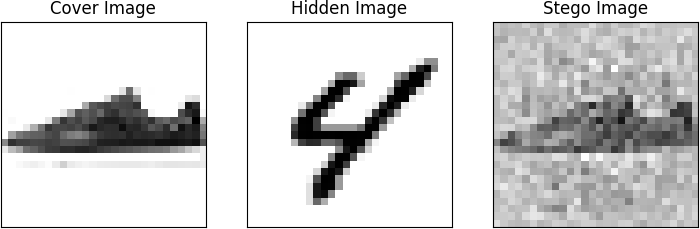}
    \end{subfigure}
    \hfill
    \begin{subfigure}{0.22\textwidth}
        \includegraphics[width=\linewidth]{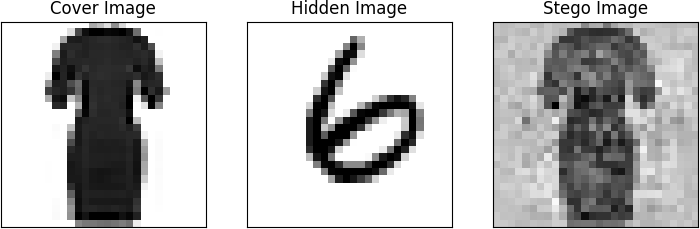}
    \end{subfigure}
    \hfill
    \begin{subfigure}{0.22\textwidth}
        \includegraphics[width=\linewidth]{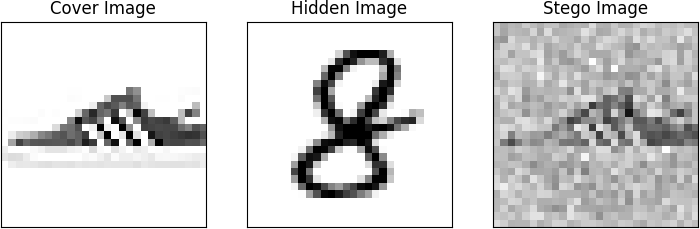}
    \end{subfigure}
    \caption{More examples on MNIST dataset.}
    \label{fig:MNIST-more}
\end{figure}

As shown in the figures, we can barely see the hidden digits in the stego images, which have patterns extremely similar to the cover images, even if the cover image is from another dataset.
However, these stego images are not predicted to be in the same classes as the cover images, rather, they are identified as the hidden images with high confidence. 
The cover image's portion within the stego image falls entirely in the null space of the ReLU NN, giving the stego image a visible pattern resembling the cover image (plus some noise-like structure) but makes no contribution to the prediction with the ReLU NN. 
On the other hand, the part of the hidden image included in the stego image appears like noise (as shown in figure \ref{fig:stego-steps}), but it is this portion that carries the most important information for prediction with ReLU NN. 

\subsection{Results on other datasets}
In addition to the MNIST dataset, We applied our method to another grayscale image dataset, FMNIST.

For the FMNIST dataset, we trained a $(784, 32, 16, 10)-$ ReLU NN with a 752-dimensional null space. 
To generate stego images, we first created a new dataset comprising both the original and rescaled images (also scaled by a factor of 0.2) that are predicted correctly. 
This new dataset has an average confidence of $99.74\%$ for original images and $97.22\%$ for the rescaled images. 
Examples of stego images created from the new dataset are shown in figure \ref{fig:stego-fmnist} and \ref{fig:FMNIST-more}. 

\begin{figure}[H]
\centering
\begin{subfigure}{.48\textwidth}
  \centering
  \includegraphics[width=.98\linewidth]{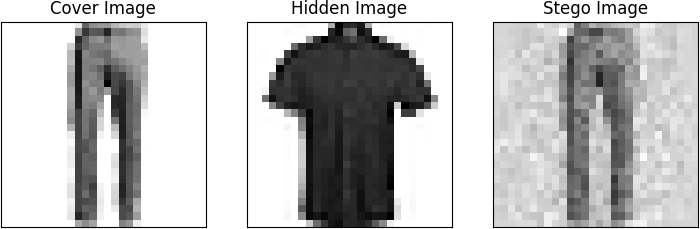}
  \caption{The cover image is predicted as ``Trouser'' with a confidence of nearly 100\%. The hidden image is predicted as ``Shirt'' with a confidence of nearly 100\%. The stego image, $S=0.2H_{\perp}+0.68\hat{C}$, is predicted as ``Shirt'' with a confidence level 86.4\%.}
  \label{fig:FMNIST-ex1}
\end{subfigure}%
\hfill
\begin{subfigure}{.48\textwidth}
  \centering
  \includegraphics[width=.98\linewidth]{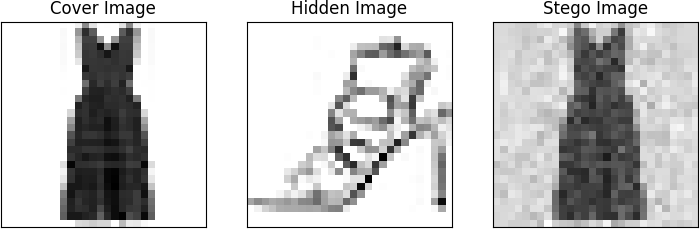}
  \caption{The cover image is predicted as ``Dress'' with a confidence of nearly 100\%. The hidden image is predicted as ``Sandal'' with a confidence of nearly 100\%. The stego image, $S=0.2H_{\perp}+0.49\hat{C}$, is predicted as ``Sandal'' with confidence of nearly 100\%.}
  \label{fig:FMNIST-ex2}
\end{subfigure}
\caption{Examples of steganographic images with FMNIST dataset.}
\label{fig:stego-fmnist}
\end{figure}

\begin{figure}[H]
    \centering
    \begin{subfigure}{0.22\textwidth}
        \includegraphics[width=\linewidth]{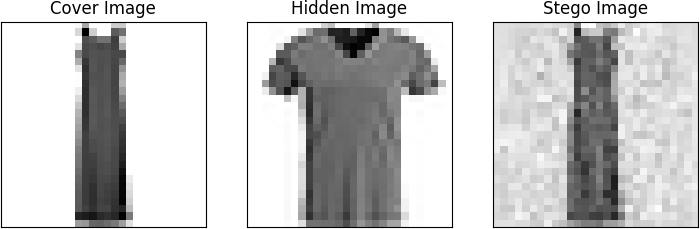}
    \end{subfigure}
    \hfill
    \begin{subfigure}{0.22\textwidth}
        \includegraphics[width=\linewidth]{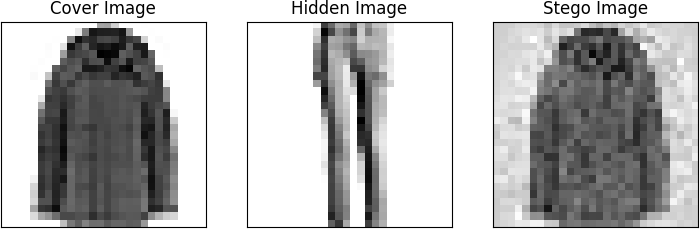}
    \end{subfigure}
    \hfill
    \begin{subfigure}{0.22\textwidth}
        \includegraphics[width=\linewidth]{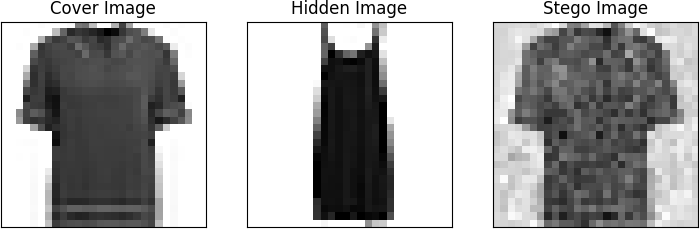}
    \end{subfigure}
    \hfill
    \begin{subfigure}{0.22\textwidth}
        \includegraphics[width=\linewidth]{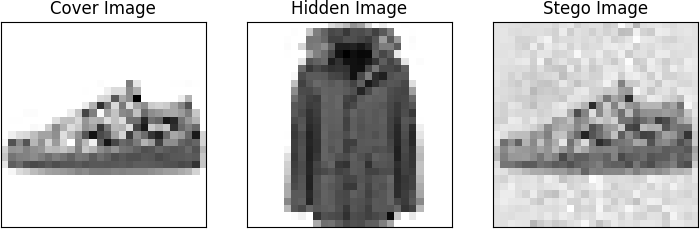}
    \end{subfigure}

    \medskip

    \begin{subfigure}{0.22\textwidth}
        \includegraphics[width=\linewidth]{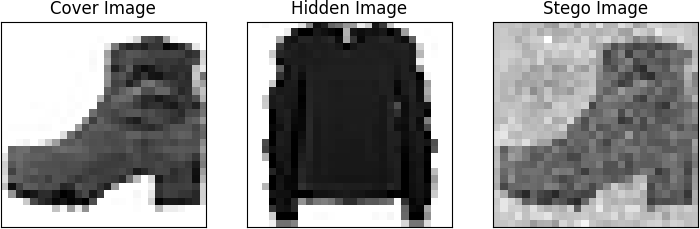}
    \end{subfigure}
    \hfill
    \begin{subfigure}{0.22\textwidth}
        \includegraphics[width=\linewidth]{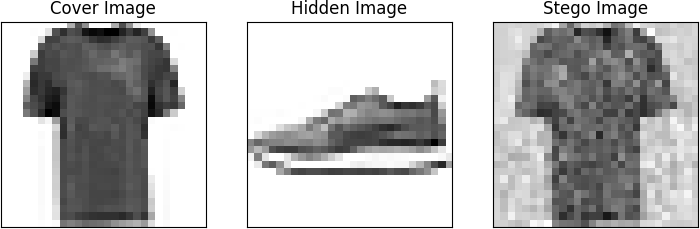}
    \end{subfigure}
    \hfill
    \begin{subfigure}{0.22\textwidth}
        \includegraphics[width=\linewidth]{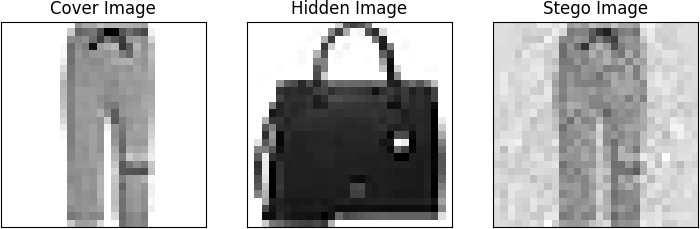}
    \end{subfigure}
    \hfill
    \begin{subfigure}{0.22\textwidth}
        \includegraphics[width=\linewidth]{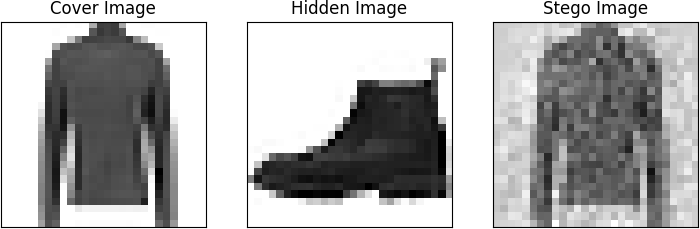}
    \end{subfigure}
    \caption{More examples on FMNIST dataset.}
    \label{fig:FMNIST-more}
\end{figure}

Similarly, as expected, the stego images have the look of cover images (as seen in the first column) but are recognized as the same categories as their corresponding hidden images (shown in the second column) with high confidence.

It is noteworthy that in Figure \ref{fig:FMNIST-ex1}, there is a decrease in confidence when predicting it as the hidden category. 
While the original hidden image $H$ is predicted as a ``shirt'' with confidence close to $1$, 
the stego image $S=0.2H_{\perp}+0.68\hat{C}$ is predicted as a ``shirt'' with lower confidence $86.4\%$. 
The reduced confidence level is caused by the rescaled hidden image. 
According to the null space method, the prediction and confidence level for the stego image $S$ should align with those for $0.2H_{\perp}$ and, consequently, $0.2H$.
In this case, if we input just the rescaled image to the NN, the rescaled hidden image $0.2H$ is also predicted as ``shirt'' with a confidence of $86.4\%$, which is consistent with our analysis. 
Therefore, in the null space method, the prediction and confidence are only related to the hidden image component of the stego image, i.e., if the stego image $S=\alpha_1H_{\perp}+\alpha_2\hat{C}$, as long as $\alpha_1H_{\perp}$ can be predicted correctly with high confidence, the stego image will also be identified as the hidden category with same confidence.

Unlike the experiment with the MNIST dataset, it is more common to see lower confidence stego image examples in the experiment with the FMNIST dataset.
%The cause of this problem may be that some classes of the FMNIST dataset do not have significant differences as indicated by the confidence measures. 
In figure \ref{fig:stego-confi-ex}, we compare the original hidden image $H$ and rescaled hidden image $0.2H$ in Figure \ref{fig:FMNIST-ex1}.
Except for the prediction as ``shirt'' with the confidence of $86.4\%$, 
the rescaled image $0.2H$ is also predicted as ``T-shirt/Top'' with a confidence of $13.4\%$.
As shown in the figure \ref{fig:stego-confi-ex}, the rescaled image $0.2H$ has lower contrast. 
The rescaling operation seems to lead to the loss of some visual details, which causes lower confidence in the rescaled images. 
For instance, the buttons and the collar can not be clearly observed in the rescaled image, so it is also more possible to be identified as ``T-shirt/Top''. 

Therefore, the achievement of high confidence in steganographic images depends on having ``good'' hidden images with high confidence on both original and rescaled data.
Essentially, the prediction confidence level of the hidden images will be no better than the prediction confidence of the original images, after rescaling has been applied.

%\begin{figure}
%    \centering
%    \includegraphics[width=0.9\textwidth]{}
%    \caption{Examples of steganographic images with FMNIST dataset.}
%    \label{fig:stego-fmnist}
%\end{figure}

\begin{figure}[H]
    \centering
    \includegraphics[width=0.3\textwidth]{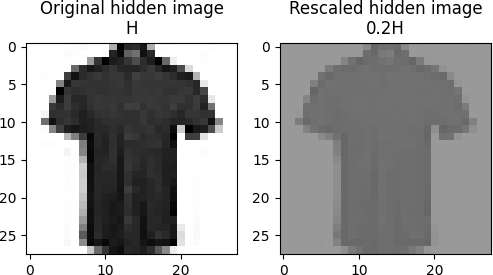}
    \caption{Original and rescaled images.}
    \label{fig:stego-confi-ex}
\end{figure}

To continue this investigation, we also perform experiments on the EMNIST Balanced dataset (Figure \ref{fig:emnist}) and CIFAR-10 dataset (Figure \ref{fig:cifar}). 
The figures suggest that the null space-based image steganography method is also applicable to more complicated and colorful images. 
To guarantee the capability of concealing an image within any chosen cover image, we can train a neural network model with a large null space.

\begin{figure}[H]
    \centering
    \begin{subfigure}{0.22\textwidth}
        \includegraphics[width=\linewidth]{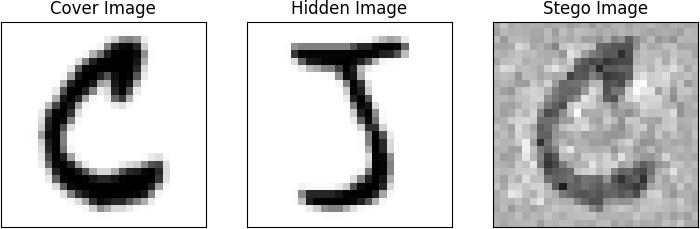}
    \end{subfigure}
    \hfill
    \begin{subfigure}{0.22\textwidth}
        \includegraphics[width=\linewidth]{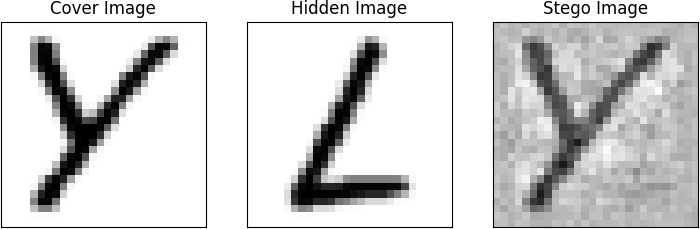}
    \end{subfigure}
    \hfill
    \begin{subfigure}{0.22\textwidth}
        \includegraphics[width=\linewidth]{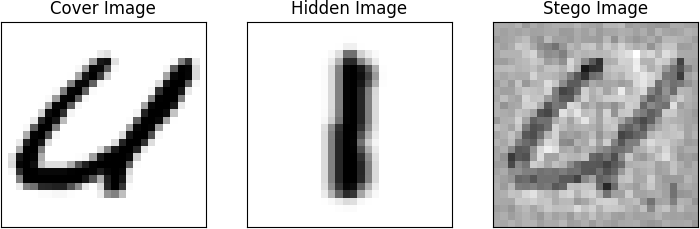}
    \end{subfigure}
    \hfill
    \begin{subfigure}{0.22\textwidth}
        \includegraphics[width=\linewidth]{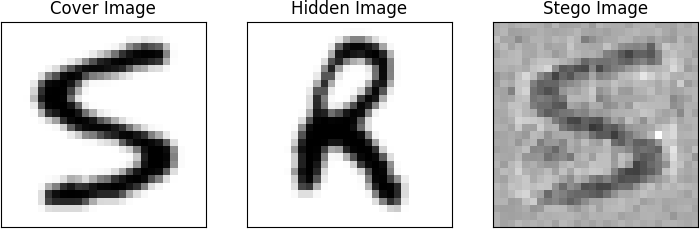}
    \end{subfigure}
    \caption{More examples on EMNIST dataset.}
    \label{fig:emnist}
\end{figure}

\begin{figure}[H]
    \begin{subfigure}{0.22\textwidth}
        \includegraphics[width=\linewidth]{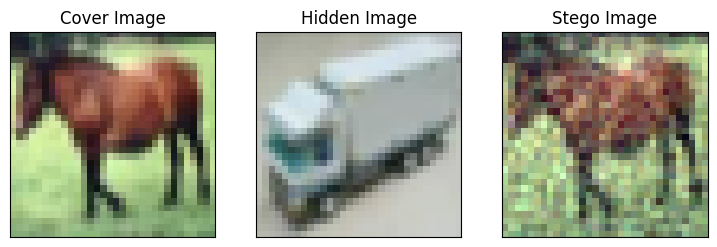}
    \end{subfigure}
    \hfill
    \begin{subfigure}{0.22\textwidth}
        \includegraphics[width=\linewidth]{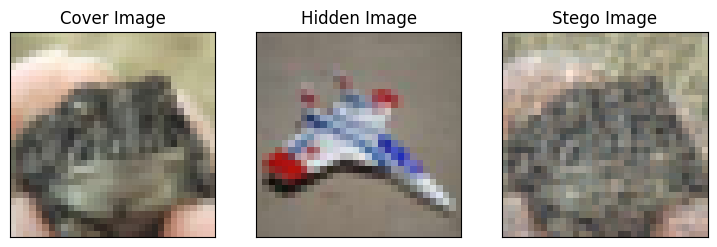}
    \end{subfigure}
    \hfill
    \begin{subfigure}{0.22\textwidth}
        \includegraphics[width=\linewidth]{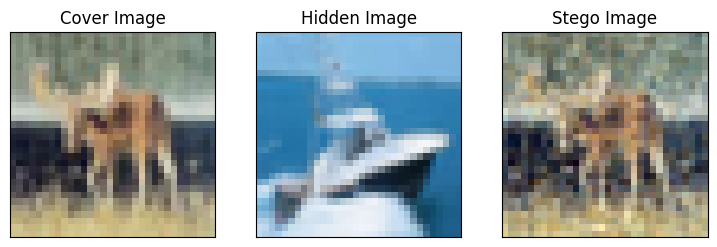}
    \end{subfigure}
    \hfill
    \begin{subfigure}{0.22\textwidth}
        \includegraphics[width=\linewidth]{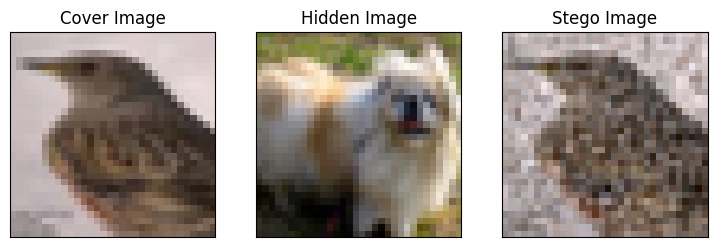}
    \end{subfigure}
    \caption{More examples on CIFAR-10 dataset.}
    \label{fig:cifar}
\end{figure}

\section{Discussions and conclusions}\label{sect:conclusion}
\paragraph{What we see and what NN sees:} 
From the analysis and experimental results presented above, it is evident that NNs do not perceive visual information in the same way as humans. Figure \ref{fig:compare-human-nn} compares what we see and what the NN ``sees''.
After removing null space components, the remaining crucial parts for prediction have fewer visual patterns than their initial appearance.
Except for that, distinct neural networks would see the same image differently, even when they have null spaces of the same dimensions, as illustrated in Figure \ref{fig:nnsees2}.

\begin{figure}[H]
\centering
\begin{subfigure}{\textwidth}
  \centering
  \includegraphics[width=.98\linewidth]{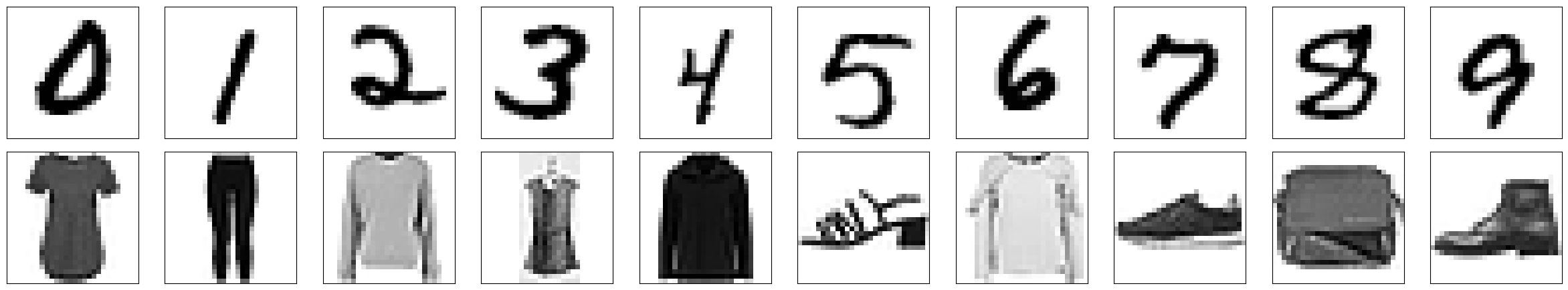}
  \caption{What we see.}
  \label{fig:wesee}
\end{subfigure}%
\hfill
\begin{subfigure}{\textwidth}
  \centering
  \includegraphics[width=.98\linewidth]{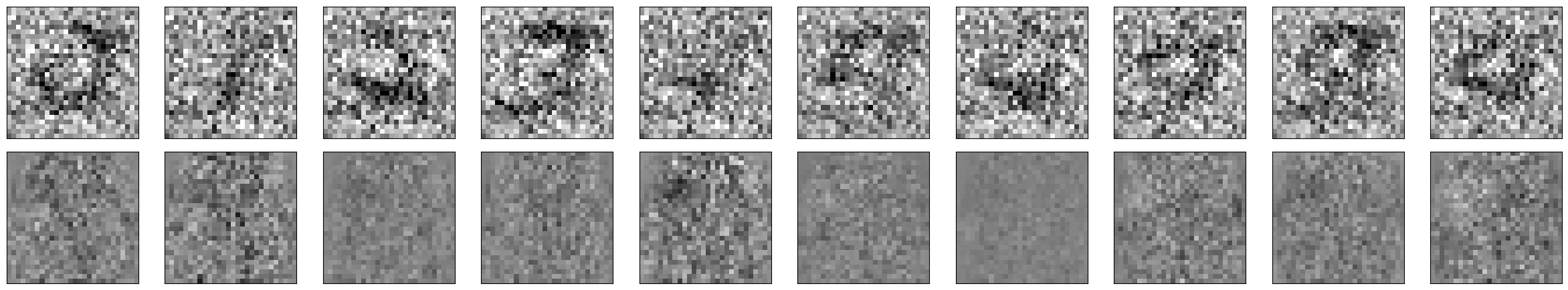}
  \caption{What NN sees.}
  \label{fig:nnsees}
\end{subfigure}
\caption{Comparison of what is visualized by humans and NNs.}
\label{fig:compare-human-nn}
\end{figure}

\begin{figure}[H]
    \centering
    \includegraphics[width=.98\linewidth]{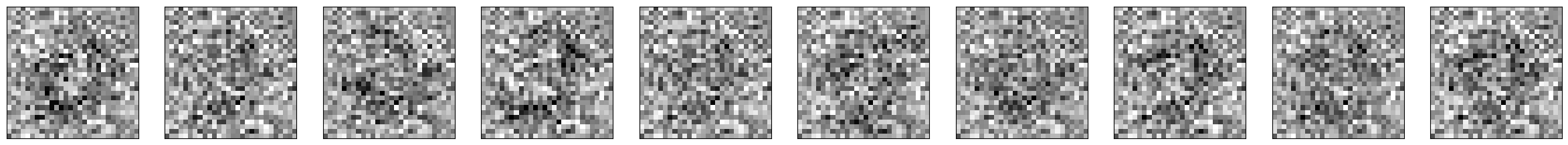}
    \caption{What another NN sees given the same set of images in Figure \ref{fig:compare-human-nn}.}
    \label{fig:nnsees2}
\end{figure}

\paragraph{Null space of NNs and reliability issue:} Ball pointed out that there are reproducibility and reliability issues with AI \cite{ball2023ai}. 
Previously, studies have found that neural network models could pick up on irrelevant features to succeed in the classification task,  raising questions about their reliability. 
In this paper, the null space analysis on neural networks provides additional insights into these reliability concerns, 
and shows that the NN may pick up on hidden features that are not just irrelevant, but crafted to confuse the NN. 
In addition to being too closely focused on aligning to the particular patterns in the training data, the design of neural network architectures may also raise risks that cannot be solved by merely increasing the dataset size. 
Importantly, it is the architecture that leads to the null space weaknesses shown in this paper.

In addition, stego images created by null space methods cannot be used to improve the training. 
As described in \cite{akhtar2018threat}, the adversarial example/image is a modified version of a clean image that is intentionally perturbed to mislead machine learning models, such as deep neural networks. 
Some studies have shown that it may be possible to harden NN against adversarial attacks. 
Further, it was observed by Szegedy et al.\cite{szegedy2014intriguing} that the robustness of deep neural networks against adversarial examples could be improved by adversarial training, where the idea is to include adversarial examples in the training data. 
% (although how one would do this is unclear). 
It is crucial to note that while adversarial examples using previous techniques may improve the training, they cannot solve the problem caused by the null space of a neural network. 

The null space analysis of NNs is not limited to the NNs in image classification tasks. In this study, we select image steganography as an application to better visualize the impact of the null space.
The existence of the null space, inherent in the neural network's architecture, implies that one can always use the null space vectors to fool a neural network or the user of a neural network, at least when the image projected onto the null space is close enough to the original to fool a human viewer. 

\normalem
\printbibliography

\section*{Appendices}
\begin{appendices}

\section{Proofs and additional results from Section \ref{sect:nullspace}}\label{appNS}
\subsection{Properties of the null space in nonlinear maps}
\begin{prop}\label{prop:n-subsp}
Let $f: \mathbb{R}^n\rightarrow \mathbb{R}^m$ be a nonlinear map.
Then the set
$N(f):=\{ \vec v\in \mathbb{R}^n: f(\vec x) = f(\vec x+a\vec v) \text{ for all } \vec x\in \mathbb{R}^n \text{ and } a\in \mathbb{R}\}$
is a subspace of $\mathbb{R}^n$.
\end{prop}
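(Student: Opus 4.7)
The plan is to verify the three subspace axioms directly from the defining condition $f(\vec{x}) = f(\vec{x} + a\vec{v})$ for all $\vec{x} \in \mathbb{R}^n$ and $a \in \mathbb{R}$. All three checks reduce to substitutions into this identity, so no hypothesis on $f$ beyond what is stated is needed; in particular, I will not use any continuity or differentiability.

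First I would observe that $\vec{0} \in N(f)$ trivially, since $\vec{x} + a\vec{0} = \vec{x}$ for every $\vec{x}$ and $a$, giving $f(\vec{x} + a\vec{0}) = f(\vec{x})$.

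Next I would handle closure under scalar multiplication, which is the easier of the two linearity checks. Given $\vec{v} \in N(f)$ and $c \in \mathbb{R}$, I need $c\vec{v} \in N(f)$, i.e.\ $f(\vec{x} + a(c\vec{v})) = f(\vec{x})$ for all $\vec{x}$ and $a$. Rewriting $a(c\vec{v}) = (ac)\vec{v}$ and applying the defining property of $\vec{v} \in N(f)$ with the scalar $ac \in \mathbb{R}$ in place of $a$ gives exactly this equality.

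Closure under addition is the step that requires a little care, and I expect it to be the main (mild) obstacle because one must apply the null-space property twice in sequence at a shifted base point. Given $\vec{u}, \vec{v} \in N(f)$, I want $\vec{u} + \vec{v} \in N(f)$, i.e.\ $f(\vec{x} + a(\vec{u}+\vec{v})) = f(\vec{x})$ for arbitrary $\vec{x}$ and $a$. I would first write $\vec{x} + a(\vec{u}+\vec{v}) = (\vec{x} + a\vec{u}) + a\vec{v}$. Since $\vec{v} \in N(f)$, the defining property applied at the input $\vec{x} + a\vec{u}$ (which is a valid choice because the property holds for \emph{all} $\vec{x} \in \mathbb{R}^n$) yields
\begin{equation*}
f\bigl((\vec{x} + a\vec{u}) + a\vec{v}\bigr) = f(\vec{x} + a\vec{u}).
\end{equation*}
Then, since $\vec{u} \in N(f)$, a second application gives $f(\vec{x} + a\vec{u}) = f(\vec{x})$. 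Chaining these equalities delivers $\vec{u} + \vec{v} \in N(f)$. Having verified the three axioms, $N(f)$ is a subspace of $\mathbb{R}^n$.
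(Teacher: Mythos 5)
Your proof is correct and follows essentially the same route as the paper: both verify the zero vector and scalar closure directly from the defining identity (using the scalar $ac$ in place of $a$), and both establish additive closure by the same two-step chain $f(\vec{x}+a\vec{u}+a\vec{v}) = f(\vec{x}+a\vec{u}) = f(\vec{x})$, applying the defining property first at the shifted base point and then at $\vec{x}$. No gaps; your version just spells out the substitutions slightly more explicitly than the paper does.
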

\begin{proof}
First, by definition, $\vec 0\in N(f)$, and for every $\vec v\in N(f)$ and for every $c\in\mathbb{R}$, $c\vec v\in N(f)$.
Second, when $\vec v_1,\vec v_2\in N(f)$,
for every $\vec x\in \mathbb{R}^n$ and $a\in \mathbb{R}$ we have $f(\vec x+a(\vec v_1+\vec v_2)) = f(\vec x+a\vec v_1+a\vec v_2)=  f(\vec x+a\vec v_1)=  f(\vec x) $,
hence $\vec v_1+\vec v_2\in N(f)$.
Therefore $N(f)$ is a subspace of $\mathbb{R}^n$.
\end{proof}

\begin{prop}\label{prop:same-image}
    Given a nonlinear map $f: \mathbb{R}^n\rightarrow \mathbb{R}^m$. For any $\vec{x}\in N(f)$ and $\vec{y}\in N(f)$, $f(\vec{x})=f(\vec{y})=f(\vec{0})$.
\end{prop}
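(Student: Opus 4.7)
The plan is to invoke the defining property of $N(f)$ directly, using the freedom to choose $\vec{x}$ and $a$ in the condition $f(\vec{x}) = f(\vec{x} + a\vec{v})$ for all $\vec{v} \in N(f)$. The key observation is that the universal quantifier over the base point gives us enough flexibility to evaluate at $\vec{0}$, which collapses the identity to something useful.

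First I would fix an arbitrary $\vec{v} \in N(f)$ and apply the definition with the choices $\vec{x} = \vec{0}$ and $a = 1$. This immediately yields $f(\vec{0}) = f(\vec{0} + 1 \cdot \vec{v}) = f(\vec{v})$. Since $\vec{v}$ was arbitrary in $N(f)$, this shows $f(\vec{v}) = f(\vec{0})$ for every $\vec{v} \in N(f)$.

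Then I would simply specialize this to the two given vectors: applying the identity to $\vec{x} \in N(f)$ gives $f(\vec{x}) = f(\vec{0})$, and applying it to $\vec{y} \in N(f)$ gives $f(\vec{y}) = f(\vec{0})$. Chaining these equalities produces $f(\vec{x}) = f(\vec{y}) = f(\vec{0})$, as required.

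There is no real obstacle here; the proposition is essentially a one-line corollary of the definition, and the only subtlety is remembering that Definition~\ref{eqn:NSdef} quantifies over \emph{all} base points $\vec{x} \in \mathbb{R}^n$ (including $\vec{0}$) rather than only a generic one. The proof therefore requires no auxiliary construction, no case analysis, and no appeal to Proposition~\ref{prop:n-subsp} or the partial null space machinery developed afterward.
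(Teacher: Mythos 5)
Your proof is correct and follows essentially the same route as the paper: both arguments evaluate the defining condition of $N(f)$ at the base point $\vec{0}$ (with $a=1$) to get $f(\vec{v})=f(\vec{0}+\vec{v})=f(\vec{0})$ for every $\vec{v}\in N(f)$, and then apply this to $\vec{x}$ and $\vec{y}$. No gaps to report.
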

\begin{proof}
For any $\vec{x}, \vec{y} \in N(f)$, $f(\vec{x})=f(\vec{x}+\vec{0})=f(\vec{0})=f(\vec{y}+\vec{0})=f(\vec{y})$.
\end{proof}

Given a nonlinear map $f: \mathbb{R}^n\rightarrow \mathbb{R}^m$,
instead of its null space $N(f)$,
we also considered its null set, 
$\mathcal{N}(f):=\{\vec{v}\in\mathbb{R}^n: f(\vec{x})=f(\vec{x}+\vec{v}) \text{ for all } \vec{x}\in\mathbb{R}^n\}$.
It is easy to see that $N(f)$ is a subset of $\mathcal{N}(f)$,
and $\mathcal{N}(f)$ may not be a vector space in general.
However,
$\mathcal{N}(f)$ also has some structure, any vector in $\mathcal{N}(f)$ that is not in the subspace $N(f)$ is part of a set of integer periodic null elements.
%For instance, it satisfies that $\vec v_1+\vec v_2\in \mathcal{N}(f)$ if $\vec v_1,\vec v_2\in \mathcal{N}(f)$.
%In particular, 
%for any $\vec{v}\in \mathcal{N}(f)\setminus N(f)$, $k\vec{v}\in \mathcal{N}(f)$ for any integer $k$.
%In other words, any vector in $\mathcal{N}$ that is NOT in the subspace $N(f)$ is part of a set of integer periodic null elements.

\begin{prop}
    Given a nonlinear map $f: \mathbb{R}^n\rightarrow \mathbb{R}^m$, consider the set $\mathcal{N}=\{\vec{v}\in\mathbb{R}^n:f(\vec{x})=f(\vec{x}+\vec{v}) \text{ for all } \vec{x}\in\mathbb{R}^n\}$. 
    %We have the following results:
    %\begin{enumerate}[nolistsep]
    %    \item $\mathcal{N}$ may not be a vector space;
    %    \item $N(f)$ is a subset of $\mathcal{N}$;
    %    \item 
        If $\vec{v}\in \mathcal{N}$ but $\vec{v} \notin N(f)$, then for any integer $k$, $k\vec{v}\in \mathcal{N}$. 
    %\end{enumerate}
\end{prop}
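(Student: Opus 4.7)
The plan is to proceed by induction on $|k|$, using only the defining property of $\mathcal N$ (invariance under shift by $\vec v$ applied at every point $\vec x\in\mathbb{R}^n$). Notice that the hypothesis $\vec v\notin N(f)$ plays no role in the conclusion; the statement is really that $\mathcal N$ is closed under integer scalar multiplication, even though it may fail to be closed under arbitrary real scalar multiplication (which is exactly what would force $\vec v$ into $N(f)$).

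First I would handle the base and the positive induction. The case $k=0$ is immediate since $\vec 0\in\mathcal N$. For $k\ge 1$, assume $(k-1)\vec v\in\mathcal N$, i.e.\ $f(\vec x)=f(\vec x+(k-1)\vec v)$ for every $\vec x$. Apply the defining property of $\vec v\in\mathcal N$ at the shifted point $\vec x+(k-1)\vec v$ to get $f(\vec x+(k-1)\vec v)=f(\vec x+(k-1)\vec v+\vec v)=f(\vec x+k\vec v)$. Chaining these equalities gives $f(\vec x)=f(\vec x+k\vec v)$ for all $\vec x$, so $k\vec v\in\mathcal N$.

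Next I would extend to negative integers by a substitution trick rather than a separate induction. Given $k=-j$ with $j>0$, I want to show $f(\vec x)=f(\vec x-j\vec v)$ for every $\vec x$. Set $\vec y:=\vec x-j\vec v$; then by the positive case applied to the integer $j$ we have $f(\vec y)=f(\vec y+j\vec v)$, and substituting $\vec y+j\vec v=\vec x$ gives $f(\vec x-j\vec v)=f(\vec x)$. Since $\vec x$ was arbitrary, $-j\vec v\in\mathcal N$.

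I do not foresee a real obstacle; the one point that needs a little care is that one cannot simply invoke subspace closure (as was done for $N(f)$ in Proposition~\ref{prop:n-subsp}) because $\mathcal N$ is not assumed to be a subspace. The argument has to stay within the shift formulation, which is why I use the pointwise substitution $\vec y=\vec x-j\vec v$ instead of any scalar-multiplication identity. Once that is observed, the whole proof is two short inductive lines.
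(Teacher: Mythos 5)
Your proof is correct and uses essentially the same argument as the paper: chaining the defining shift property at translated points to handle positive multiples (the paper phrases this as closure of $\mathcal{N}$ under addition rather than as an explicit induction), and the identity $f(\vec{x}-\vec{v})=f(\vec{x}-\vec{v}+\vec{v})=f(\vec{x})$ for negatives, which is exactly the paper's negation step. Your observation that the hypothesis $\vec{v}\notin N(f)$ is not actually needed also matches the paper's proof, which never uses it.
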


\begin{proof}
Let $\vec{v_1}, \vec{v_2}\in \mathcal{N}(f) $,
we prove that $-\vec{v_1}$ and $\vec{v_1}+ \vec{v_2}\in \mathcal{N}(f) $.
Indeed,
$\vec{v_1}\in \mathcal{N}(f) $ implies that $f(\vec{x}-\vec{v_1})=f(\vec{x}-\vec{v_1}+\vec{v_1})= f(\vec{x})$ 
for every $\vec{x}\in\mathbb{R}^n$,
hence $-\vec{v_1} \in \mathcal{N}(f) $.
And, $\vec{v_1}, \vec{v_2}\in \mathcal{N}(f) $ implies that $f(\vec{x}) =  f(\vec{x}+\vec{v_1})= f(\vec{x}+\vec{v_1}+\vec{v_2})$ for every $\vec{x}\in\mathbb{R}^n$ ,
hence $\vec{v_1}+ \vec{v_2}\in \mathcal{N}(f) $.
Therefore, if $\vec{v}\in \mathcal{N}(f)$, $k\vec{v}\in \mathcal{N}$ for any integer $k$.
\end{proof}

\subsection{Proofs of lemmas from Section \ref{sect:nullspace}}

\begin{proof}[\textbf{Proof of Lemma \ref{lem:pn_subspace}}]
Assume the nonlinear map $f: \mathbb{R}^n\rightarrow \mathbb{R}^m$ has a decomposition $f=f_2\circ f_1$ where $f_1$ is linear. 
For any $\vec{x}_{null}\in\text{Null}(f_1)$, $f_1(\vec{x})=f_1(\vec{x}+\vec{x}_{null})$ holds for every $\vec{x}\in\mathbb{R}^n$, 
$f(\vec{x}+\vec{x}_{null}) = f_2\circ f_1(\vec{x}+\vec{x}_{null}) = f_2\circ f_1(\vec{x})= f(\vec{x})$.
Therefore, $\vec{x}_{null}\in N(f)$ and $PN_{f_1}=\text{Null}(f_1)$ is a subspace of $N(f)$ with $\dim PN(f)\leq \dim N(f)$.
\end{proof}

\begin{proof}[\textbf{Proof of Lemma \ref{lem:pn}.}]
Given the nonlinear map $f: \mathbb{R}^n\rightarrow \mathbb{R}^m$, consider the quotient space
$\mathbb{R}^n/ N(f) := \{[\vec v]: \vec v\in \mathbb{R}^n\}$, 
where $[\vec v]=\{\vec v+ \vec y:\vec y\in N(f)\} $.
The addition $[\vec v]+ [\vec u] = [\vec v+\vec u]$, scalar multiplication $\lambda [\vec v] = [\lambda \vec v] $, and $[\vec 0]=N(f)\in \mathbb{R}^n/ N(f)$ make $\mathbb{R}^n/ N(f)$ a vector space (of dimension $n-\dim(N(f))$).
Denote $f_1$ to be the quotient map from $\mathbb{R}^n$ to $\mathbb{R}^n/ N(f)$ (i.e., $f_1(\vec v) = [\vec v]$).
Then, $f_1$ is linear and $ \text{Null}(f_1) = N(f)$.
Define $f_2: \mathbb{R}^n/ N(f)\rightarrow \mathbb{R}^m$ to be the map: $f_2([\vec v])= f(\vec v)$.
To show $f_2$ is well-defined,
notice that when $[\vec v]=[\vec u] $, $\vec u=\vec v+\vec y$ for some $\vec y\in N(f)$,
hence $f(\vec u)=f(\vec v+\vec y) = f(\vec v) $ by the definition of $N(f)$.
Now $f= f_2\circ f_1$ is the decomposition needed.
\end{proof}

\section{Related results in section \ref{CNN}}\label{appCNN}
%\textcolor{blue}{Calculate the rank of convolution matrices in this section. Definition: valid padding, same padding.}
The convolution operation is linear. 
The convolution of an image and a kernel can also be written in a matrix-vector multiplication form. 
For example, consider the case of a  $5\times5\times1$ image convolved with a $3\times3\times1$ kernel.
Figure \ref{fig:conv-ex} and \ref{fig:valid-pad} show the convolution with valid padding. A $5\times5\times1$ image convolved with a $3\times3\times1$ kernel with valid padding can be written as a $9\times25$ matrix (kernel) multiplied with a $25\times1$ vector (image). 

The convolution with the same padding keeps the input dimension and the output dimension the same by appending zero values in the outer frame of the images.
Figure \ref{fig:same-pad} shows the matrix-vector multiplication form of a convolution operation with the same padding, 
that is, a $25\times25$ matrix (kernel) multiplied with a $25\times 1$ vector. 

\begin{figure}[H]
    \centering
    \includegraphics[width=0.5\textwidth]{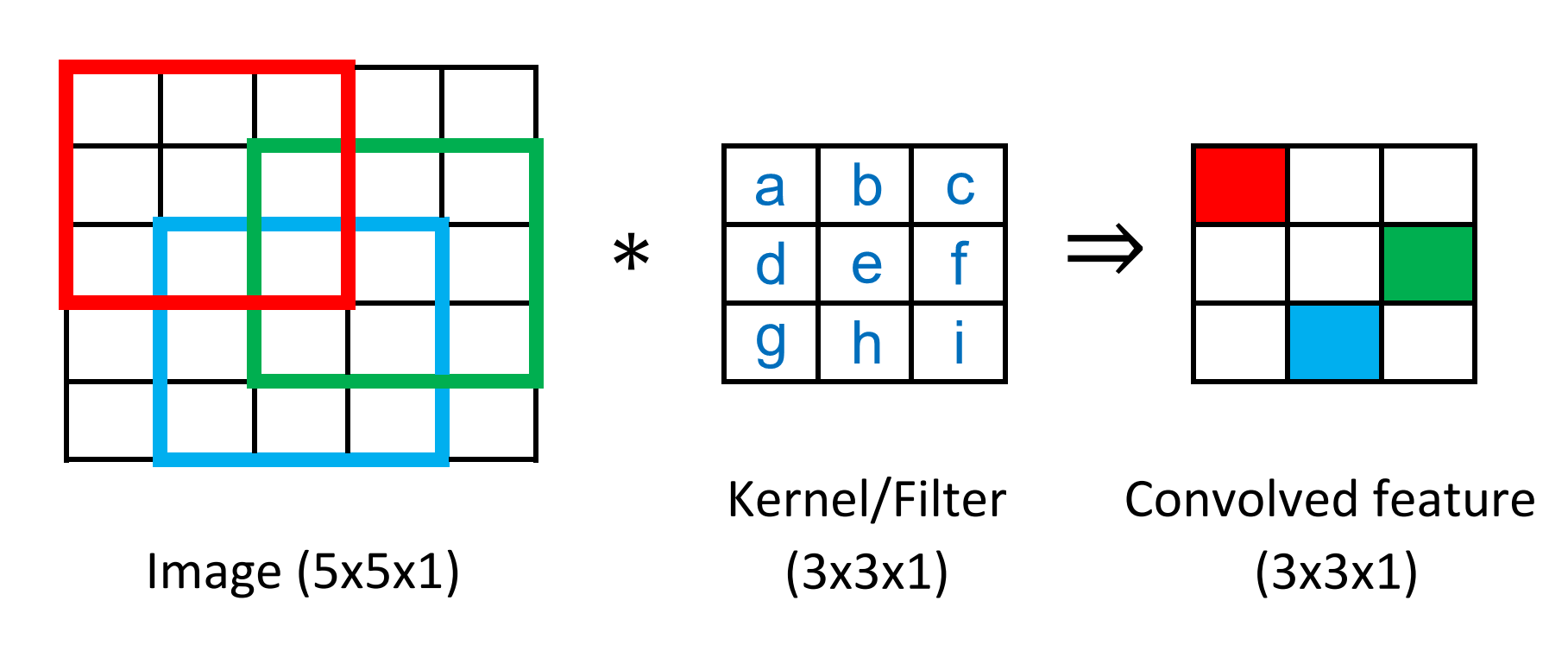}
    \caption{Convolution example}
    \label{fig:conv-ex}
\end{figure}

\begin{figure}[H]
\centering
\begin{subfigure}{.5\textwidth}
  \centering
  \includegraphics[width=.8\linewidth]{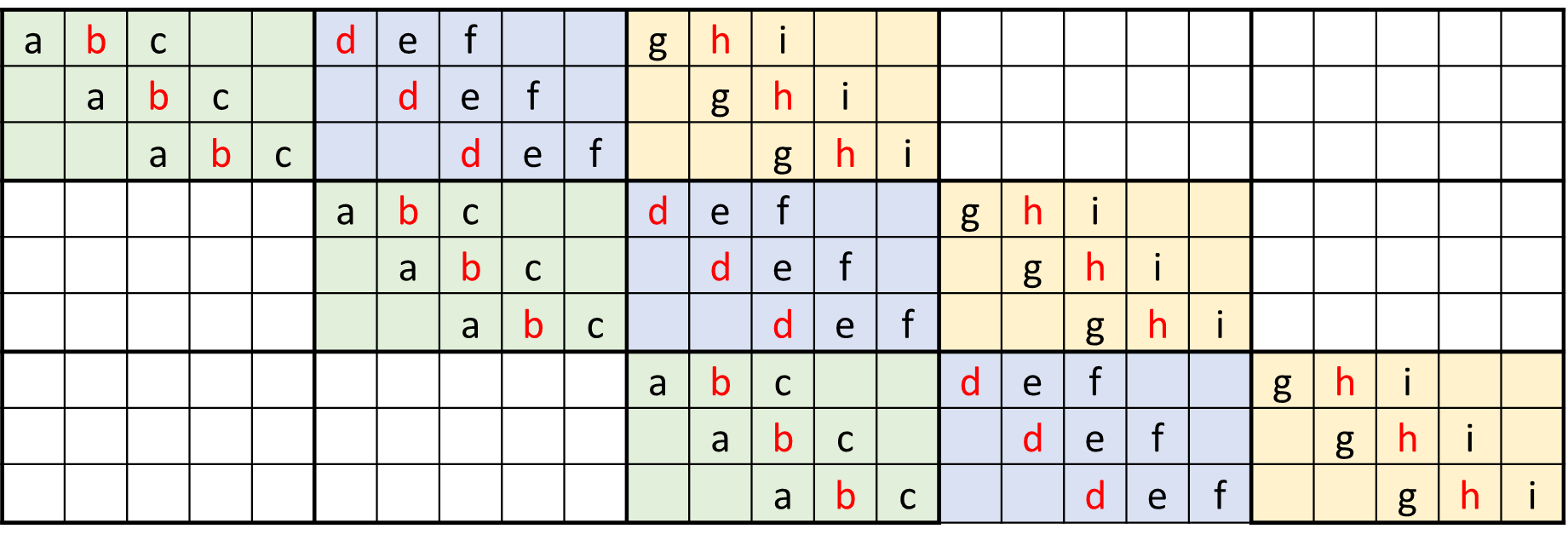}
  \caption{Convolution with valid padding}
  \label{fig:valid-pad}
\end{subfigure}%
\begin{subfigure}{.5\textwidth}
  \centering
  \includegraphics[width=.8\linewidth]{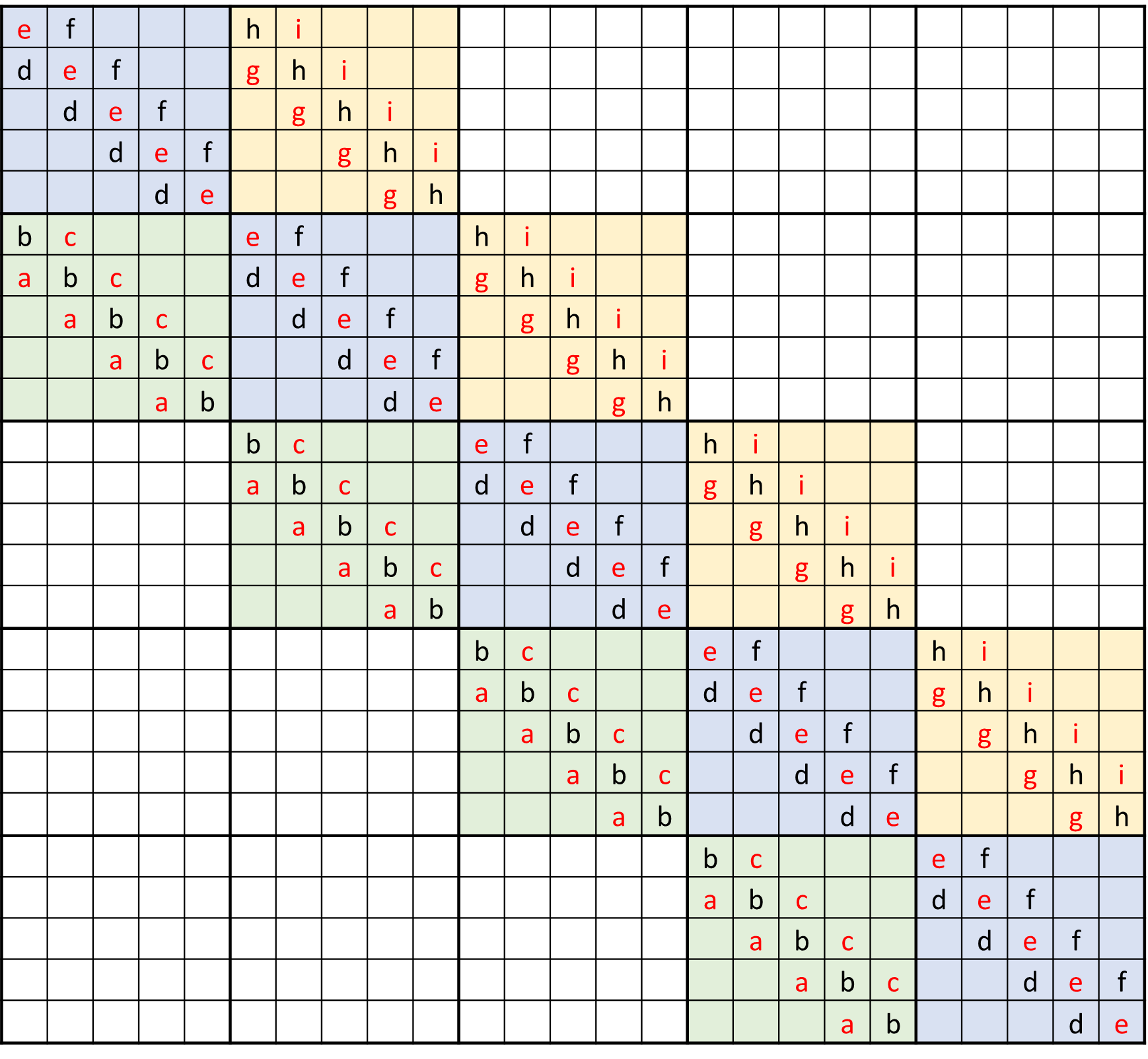}
  \caption{Convolution with same padding}
  \label{fig:same-pad}
\end{subfigure}
\caption{Standard matrices for convolution operation.}
\label{fig:conv_mat}
\end{figure}

We can prove that for almost all cases, the convolution operations with valid padding or same padding are full rank, i.e., the dimension of the null space is exactly the difference of the dimensions of input and output.
%Proof sketch: A matrix that has same elements on the diagonal can be written as $A=aI+B$, and the eigenvalues of $A$ are the sums of eigenvalues of $B$ and $a$. So, if $A$ is not full rank, $B$ must have at least one eigenvalue equal to $-a$, which has a small possibility of being true.
More precisely, consider all the convolution operations convolving $n_1\times n_2$ inputs with $k_1\times k_2$ kernels. 
The set of all $k_1\times k_2$ kernels can be identified with $\mathbb{R}^{k_1\times k_2}$, and we may equip the Lebesgue measure on the set.
The following Lemma shows that, for all but a zero-measure subset of all kernels, the convolution operation has full rank.

\begin{lemma}\label{lem:conv}
Let $ n_1,n_2,k_1,k_2$ be positive integers and $n_1\geq k_1$, $n_2\geq k_2$.
For almost all $k_1\times k_2$ kernels,
the (valid padding or same padding) convolution operation of the kernel acting on $n_1\times n_2$ images has full rank.
\end{lemma}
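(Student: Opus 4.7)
The plan is to realize the set of "bad" kernels -- those producing a convolution matrix of deficient rank -- as the common zero locus of a finite family of polynomials in the kernel entries, and then to exhibit a single explicit kernel at which one of these polynomials is non-zero. The classical fact that the zero set of a non-identically-zero polynomial on $\mathbb{R}^{k_1 k_2}$ has Lebesgue measure zero will then finish the argument.

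First I would fix notation: identify the space of $k_1 \times k_2$ kernels with $\mathbb{R}^{k_1 k_2}$ and write the (valid or same padding) convolution matrix as $M(K)$, noting that every entry of $M(K)$ is a linear -- indeed a coordinate -- function of $K$, as is evident from the explicit matrix forms in Figures \ref{fig:valid-pad} and \ref{fig:same-pad}. Let $r$ denote the target rank: $r = (n_1 - k_1 + 1)(n_2 - k_2 + 1)$ for valid padding and $r = n_1 n_2$ for same padding. A matrix has rank less than $r$ if and only if every $r \times r$ minor vanishes, so $\mathcal{B} := \{K : \operatorname{rank} M(K) < r\}$ is the common zero set of a finite collection of polynomials $p_\alpha(K)$, each of total degree at most $r$ in the coordinates of $K$. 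If even a single $p_{\alpha^\ast}$ is not identically zero, then $\mathcal{B} \subset \{p_{\alpha^\ast} = 0\}$, and the zero set of a non-trivial polynomial on $\mathbb{R}^{k_1 k_2}$ is a standard Lebesgue-null set.

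Next I would exhibit the required witness kernel $K^\ast$ in each padding case. For valid padding, I would take $K^\ast$ to be the delta kernel with $K^\ast_{1,1} = 1$ and zeros elsewhere; then for every output position $(a,b)$ with $1 \leq a \leq n_1 - k_1 + 1$ and $1 \leq b \leq n_2 - k_2 + 1$, the output entry equals the single pixel $X(a,b)$. Hence each row of $M(K^\ast)$ contains exactly one nonzero entry, equal to $1$, and distinct rows select distinct pixels, producing an $r \times r$ identity submatrix and confirming $\operatorname{rank} M(K^\ast) = r$. For same padding, I would take $K^\ast$ to be the delta kernel placed at the position that aligns the kernel atom with the output pixel under the paper's same-padding convention (for odd-sized kernels this is the center; in general it is the coordinate offset $(0,0)$ in the convention depicted in Figure \ref{fig:same-pad}). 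This choice makes the convolution act as the identity on images, so $M(K^\ast) = I_{n_1 n_2}$ and $\det M(K^\ast) = 1 \neq 0$, so in particular the corresponding $r \times r$ minor polynomial is non-trivial.

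The step I expect to be most delicate is confirming existence of the aligning delta position for same padding when $k_1$ or $k_2$ is even, since the notion of a symmetric center is ambiguous and the convention for the asymmetric zero border must be tracked carefully. If a direct identity cannot be realized for the chosen convention, I would instead exhibit $K^\ast$ supported at a single position for which the resulting $M(K^\ast)$ is, up to a reordering of its rows and columns, lower triangular with nonzero diagonal, again guaranteeing $\det M(K^\ast) \neq 0$. In either sub-case the existence of a non-trivial maximal minor polynomial is established, and the polynomial/measure-zero argument of the first paragraph then applies uniformly, yielding full rank for almost every kernel under both padding conventions.
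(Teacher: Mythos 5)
Your proof is correct, but it takes a genuinely different route from the paper. You use the standard ``generic rank'' argument: the entries of $M(K)$ are linear in the kernel coordinates, so the top-order ($r\times r$) minors are polynomials on $\mathbb{R}^{k_1k_2}$; you then exhibit a delta kernel as a witness at which one such minor is nonzero (an identity submatrix for valid padding, the identity matrix for same padding), and invoke the fact that the zero set of a nontrivial polynomial is Lebesgue-null. The paper instead never exhibits a full-rank kernel: after permuting columns it extracts a square submatrix whose diagonal entries all equal a single kernel weight $a$, writes it as $aI+B$ with $B$ independent of $a$, and observes that for each fixed choice of the remaining weights only finitely many values of $a$ (the negatives of eigenvalues of $B$) can destroy invertibility, concluding measure zero by sectioning in the variable $a$. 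Your approach buys an explicit witness, a uniform treatment of both paddings, and a cleaner appeal to the polynomial-zero-set fact; the paper's approach buys independence from any explicit construction, reducing everything to a one-parameter perturbation-of-identity argument.

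One caveat on your contingency plan for even kernels: a delta kernel that induces a \emph{nonzero} shift under same padding does not give a matrix that is triangular with nonzero diagonal up to permutation --- the rows corresponding to output positions whose shifted pixel falls outside the image are identically zero, so that matrix is singular and the fallback as stated cannot succeed. Fortunately the fallback is never needed: under any same-padding convention the per-side padding is at most $k_i-1$, so the kernel window for output position $(a,b)$ always contains the pixel $(a,b)$ itself; placing the delta at offset $(p_{\mathrm{top}}+1,\,p_{\mathrm{left}}+1)$ therefore realizes the identity map for even as well as odd kernel sizes, and your main argument goes through unchanged.
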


\begin{proof}
Let $M$ be the matrix representation of the convolution operation (see Figure \ref{fig:conv_mat} for both valid padding and same padding cases.) 

To begin with, let's focus on the convolution with valid padding (see Figure \ref{fig:valid-pad}.)
The null space of a convolution operation is $\text{Null}(M)$.
By switching columns in the matrix $M$, we may assume that there is a weight $a$ of the kernel that appears and only appears on every entry of the diagonal of $M$.
As an example, for the matrix in Figure \ref{fig:valid-pad}, we switch the columns with the new order $(1, 2, 3, 6, 7, 8, 11, 12, 13, 4, 5, 9, 10, 14, 15, \cdots)$.
Then, the diagonal has entries all equal to $a$.
Let $M_1$ be the matrix consisting of the first $\min(k_1,k_2)$ columns and the first $\min(k_1,k_2)$ rows of $M$, which is a square matrix with diagonal entries all equal to $a$.
In the following, we show $M_1$ has full rank for all but a finite number of choices of $a$, hence $M$ also has full rank for almost all kernels.
We have that $M_1$ is a square matrix, and $M_1=aI+ B$, where $I$ is the identity matrix and $B$ is a square matrix independent of $a$ with diagonal entries all equal to zero.
Therefore, $M_1$ is invertible (i.e., has full rank) if and only if $-a$ is not an eigenvalue of $B$.
Fix the other values in the kernel, only change the value of $a$, then the matrix $B$ is fixed.
While $B$ has finitely many eigenvalues, for every $B$, for all but a finite number of choices of $a$, $M_1$ has full rank. 
Thus, for all but a zero-measure subset of all $k_1\times k_2$ kernels, $M_1$ and $M$ has full rank.

For the convolution with same padding, let $M$ be the matrix representation of a given kernel (see Figure \ref{fig:same-pad}.) The diagonal entries are all equal.
Following the proof above, we know that $M$ is full rank for almost all kernels.
\end{proof}

\section{A brief introduction to fully connected neural networks}\label{appIntro}
%\subsection{Fully connected neural networks (FCNNs)}
In order to establish notation and to ensure a consistent presentation, the following section describes the NN architecture considered here. 
As illustrated in Figure \ref{fig:fcnn}, we will consider a fully connected neural network $f: \mathbb{R}^{n_0}\rightarrow\mathbb{R}^{n_{K+1}}$ with $n_0$-dimensional input and $n_{K+1}$-dimensional output. 
Assume that there are $K$ hidden layers, each having $n_i$, for nodes $i=1,\cdots, K$. 
The weights from the $i^{th}$ layer, $i=1, \cdots, K+1$ are denoted by weight matrix $W_i$ and bias vector $\vec{b}_i$, respectively. 
More precisely, the $(m_1, m_2)$ entry in weight matrix $W_i$ connects the $m_2^{th}$ output in $(i-1)^{th}$ layer and the $m_1^{th}$ node in the $i^{th}$ layer; the $m^{th}$ entry in vector $\vec{b}_i$ is the bias term of the $m^{th}$ node in the $i^{th}$ layer.
The activation functions are denoted by $\sigma$. 

\begin{figure}[H]
    \centering
    \includegraphics[width=.8\textwidth]{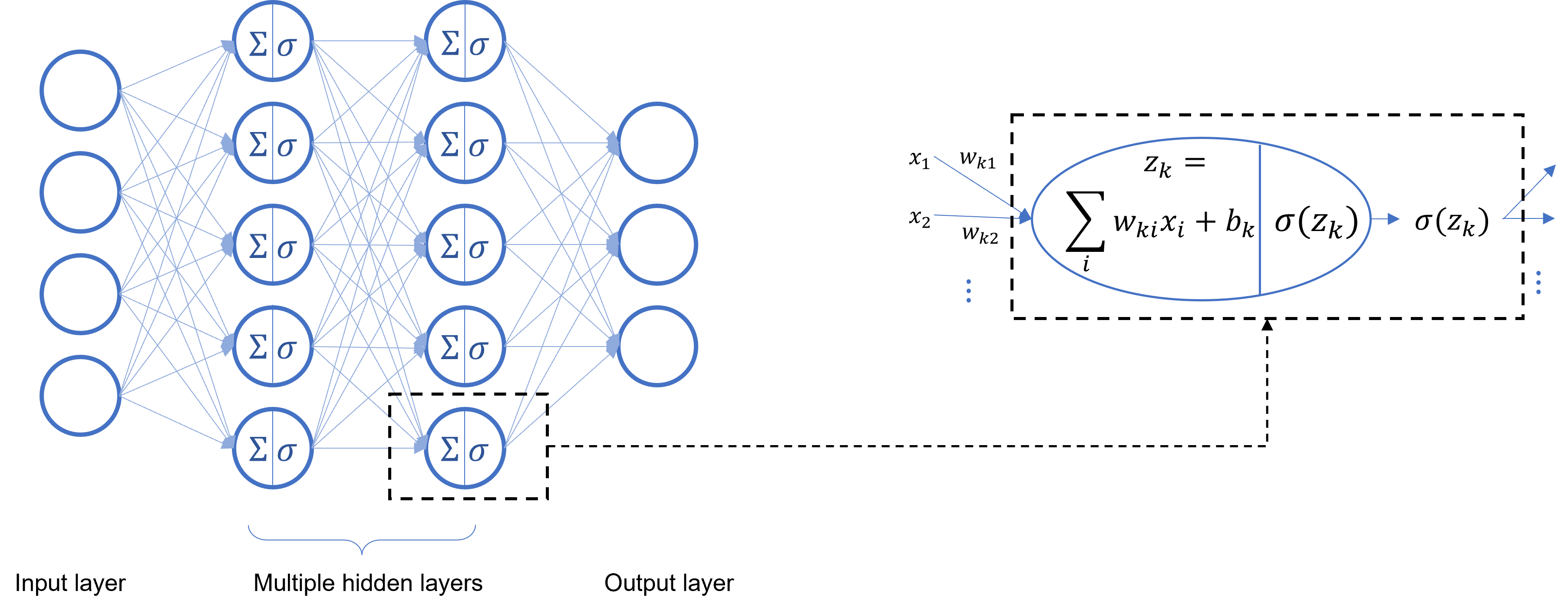}
    \caption{Architecture of FCNN}
    \label{fig:fcnn}
\end{figure}

As shown in Figure \ref{fig:fcnn}, assume the input values are $\{x_i\}$, output of the $k^{th}$ neuron in the $j^{th}$ layer is given by 
$$\sigma(z_k^{[j]})=\sum\limits_{i}W_j(k, i)x_i+\vec{b}_j(k) ,$$ 
and outputs of the $j^{th}$ layer can be written in a compact matrix form
$$\sigma(\vec{z}^{[j]})=W_j\vec{x}+\vec{b}_j.$$
The function represented by this neural network is 
$$f(\vec{x})=W_{K+1}\;\sigma(W_{K}\;\sigma(\cdots W_2\;\sigma(W_1\vec{x}+\vec{b}_1)+\vec{b}_2\cdots)+\vec{b}_K)+\vec{b}_{K+1}.$$

Define a sequence of linear and affine transformations as $T_i(\vec{x})=W_i \vec{x}, A_i(\vec{x}) = \vec{x} +\vec{b}_i$ for $i=1,\dots,k+1$, respectively. 
The function of FCNN $f$ can also be represented by a composition of maps:
$$f = A_{k+1} \circ T_{k+1} \circ \sigma \circ T_k \circ \cdots \circ A_2 \circ T_2 \circ \sigma \circ A_1 \circ T_1.$$

The notations established here are used throughout the paper.

\end{appendices}

\end{document}